\definecolor{LightGray}{gray}{0.9}
\definecolor{misocolor}{rgb}{0.8, 0.33, 0.00}
\definecolor{pierrecolor}{RGB}{ 31, 156, 253}
\newcommand{\Dict}{\algofont{\texttt{Dict}}\xspace}
\newcommand{\List}{\algofont{\texttt{List}}\xspace}
\newcommand{\Set}{\algofont{\texttt{Set}}\xspace}
\newcommand{\policyalgo}{\hyperref[alg:policy]{\texttt{MaxPlayerPolicy}}\xspace}
\newcommand{\efficientouralgo}{\hyperref[alg:memory efficient ouralgo]{\texttt{PracticalIXOMD}}\xspace}
\newcommand{\algofont}[1]{\normalfont\texttt{#1}}
\newcommand{\I}{\mathbb{I}}
\newcommand{\poly}{\mathrm{poly}}
\newtheorem{theorem}{Theorem}
\newtheorem{corollary}{Corollary}[theorem]
\newtheorem{lemma}[theorem]{Lemma}
\newtheorem{remark}{Remark}
\let\Pr\relax
\newcommand{\Pr}{\bm{\mathrm{Pr}}}
\newcommand{\D}{\mathrm{D}}
\newcommand{\regret}{\mathfrak{R}}
\newcommand{\maxpi}{\Pi_{\mathrm{max}}}
\newcommand{\minpi}{\Pi_{\mathrm{min}}}
\newcommand{\N}{\mathbb{N}}
\newcommand{\angl}[1]{\mleft\langle #1 \mright\rangle}
\newcommand{\pp}[1]{\mleft( #1 \mright)}
\let\bb\relax
\newcommand{\bb}[1]{\mleft[ #1 \mright]}
\newcommand{\mbb}{\middle \|}
\newcommand{\pluseq}{\mathrel{+}=}
\newcommand{\muring}{\mathring{\mu}}
\def\cA{{\mathcal{A}}}
\def\cB{{\mathcal{B}}}
\def\cF{{\mathcal{F}}}
\def\cO{{\mathcal{O}}}
\def\cS{{\mathcal{S}}}
\def\cX{{\mathcal{X}}}
\def\cY{{\mathcal{Y}}}
\def\fL{\mathfrak{L}}
\author{%
  Tadashi Kozuno\thanks{Equal contribution}\\
  University of Alberta\\
  \texttt{tadashi.kozuno@gmail.com}\\
  \And
  Pierre M{\'e}nard\footnotemark[1]\\
  Otto von Guericke Universit{\"a}t Magdeburg\\
  \texttt{pierre.menard@ovgu.de}\\
  \AND
  R{\'e}mi Munos\\
  DeepMind Paris\\
  \texttt{munos@deepmind.com}\\
  \And
  Michal Valko\\
  DeepMind Paris\\
  \texttt{valkom@deepmind.com}\\
}
\title{Model-Free Learning for\\ Two-Player Zero-Sum Partially Observable\\ Markov Games with Perfect Recall}
\begin{document}

\maketitle

\begin{abstract}
We study the problem of learning a Nash equilibrium (NE) in an imperfect information game (IIG) through self-play. Precisely, we focus on two-player, zero-sum, episodic, tabular IIG under the \textit{perfect-recall} assumption where the only feedback is realizations of the game (bandit feedback). In particular, the \textit{dynamics of the IIG is not known}---we can only access it by sampling or interacting with a game simulator. For this learning setting, we provide the Implicit Exploration Online Mirror Descent (\ouralgo) algorithm. It is a model-free algorithm with a high-probability bound on the convergence rate to the NE of order $1/\sqrt{T}$ where~$T$ is the number of played games. Moreover, \ouralgo is computationally efficient as it needs to perform the \textit{updates only along the sampled trajectory}.
\end{abstract}

\section{Introduction}

We study the setting of \emph{learning} a Nash equilibrium (NE, \citealp{nash1950equilibrium}) in an \textit{imperfect information game} (IIG, \citealp{osborne1994course}). Precisely, we focus on two-player zero-sum IIG under the \textit{perfect-recall} assumption \citep{kuhn1953extensive}. Perfect recall means that the players \textit{do not forget} observations encountered or actions taken during the game. We model the game as a tabular, episodic (of horizon $H$, \textit{partially observable Markov game} (POMG) with a state space of size $S$, action spaces of size $A$ and $B$ for the max- and min-player respectively, and observation spaces (i.e., information set spaces, which are partitions of the state space) of size $X$ and $Y$ for the max- and min-player. In learning by \textit{self play}, we control \textit{both} the max \emph{and} min-player. After $T$ episodes of the game we are asked to return a profile that is close to a NE in terms of \textit{exploitability} gap \citep{ponsen2011computing}.

\paragraph{Full feedback} In case when we have perfect knowledge of the game (i.e., the transition probabilities and rewards) there already exist several methods approximating the NE. The first line of work casts the setting through the sequence-form representation as a linear program which can be solved efficiently for games with moderate sizes of observation spaces~$X$ and~$Y$  \citep{Rom62,VONSTENGEL1996220,koller1996efficient}. 
The sequence-from representation allows also to cast the setting as \textit{finding a saddle point} \citep{hoda2010smoothing}. It is then possible to adapt first-order methods such as Nesterov’s smoothing \citep{nesterov2005smooth} and \MirrorProx \citep{nemirovski2004prox} to IIG, as done respectively by \citet{hoda2010smoothing,kroer2018solving} and \citet{kroer2015faster,kroer2020faster}. These methods have a rate of convergence of order $\tcO((X+Y)/T)$, where $\tcO$ hides poly-log terms in $e^H,X,A,Y,B,T$.\footnote{Therefore, we hide \emph{polynomial} dependence on the horizon $H$.} Note that \emph{game-dependent} exponential rate could also be obtained with first-order methods, see \citet{gilpin2012first} and \citet{munos2020fast}. Another important line of work relies on minimizing the \textit{counterfactual regret} \citep{zinkevich2007regret}. It uses an algorithm designed for adversarial bandits to locally minimize the regret of each player. A well-known example is \CFR by \citet{zinkevich2007regret} based on the regret-matching algorithm \citep{hart2000simple,gordon2007no}. There exist many other variants of it, such as \CFRp \citep{tammelin2014solving, burch2019revisiting}, see also \citet{ farina2019optimistic,farina2020faster}. These algorithms however only enjoy a (known) guarantee of convergence of order $\tcO((X\sqrt{A}+Y\sqrt{B})/\sqrt{T})$.
Note that the two last approaches require computing a full feedback: either some gradient for the first-order methods or the local regret for counterfactual regret minimization.
Usually, this can be done by a complete traversal of the state space leading to a time-complexity of order $\cO(S)$. 
Sampling can reduce this time-complexity to $\cO(X+Y),$\footnote{Note that $\cO(X+Y)$ is at most $\cO(S)$.} i.e., we sample the transitions and the actions of the other player; see for example the external-sampling \MCCFR algorithm \citep{lanctot2009monte, farina2020stochastic}.

\paragraph{Bandit feedback} In this paper, we consider a more challenging setting where we \emph{only observe realizations of the games (bandit feedback) and do not have any prior knowledge of the game}. Precisely, the rewards, the transition probabilities (sometimes modeled as the policy of a \textit{chance player}), the observation/state space, and its (tree) structure are unknown.

\paragraph{Bandit feedback, model-based} To deal with the limited bandit feedback, \citet{zhou2020posterior} consider model-based approach by using \textit{posterior sampling} (PS, \citealp{strens2000bayesian}) to learn a model and then use the \CFR algorithm in games sampled from the posterior. They obtain a convergence rate of order $\tcO(\max(XA+YB,\sqrt{S})/\sqrt{T})$ but only when the games are actually sampled according to the known prior. In addition, they still need to know the state
space and its structure\footnote{ \label{note:structure}By \textit{structure} we refer to the tree structure of the state space or observations spaces, see Section~\ref{sec:preliminary}.} in order to instantiate the prior. Instead, \citet{zhang2020finding} rely on the principle of optimism in presence of uncertainty to incrementally build a model of the game. Then, they feed optimistic local regrets to a counterfactual regret minimizer algorithm such as the \CFR algorithm. They prove a high-probability bound on the exploitability gap of order $\tcO((X\sqrt{A}+Y\sqrt{B})/\sqrt{T})$.

\paragraph{Bandit feedback, model-free} Our results follows another line of work which consider a \emph{model-free} approach. A well known algorithm of this type is outcome-sampling \MCCFR \citep{lanctot2009monte, farina2020stochastic}, which builds an importance-sampling estimate of the counterfactual regret given \emph{exploration profile} (named balanced strategy by \citealp{farina2020stochastic}).
This exploration strategy should ensure that the players explore the information sets uniformly (i.e., such that all induced reach probabilities are lower-bounded by an absolute constant). Note that it is not clear how to find such an exploration profile without knowing the structure of the game.\textsuperscript{\ref{note:structure}} In particular, following the uniform distribution over the actions at each information set is not necessarily a good choice, e.g., when the tree formed by the information set space is not balanced. This algorithm has a guarantee of order $\tcO((X\sqrt{A}+Y\sqrt{B})/\sqrt{T})$ with high probability. Building on this idea, \citet{farina2021model-free} propose to mix the exploration profile with one produced by a counterfactual regret minimizer such as \CFR. They prove a high-probability bound on the exploitability gap of order $\tcO(\poly(X,A,Y,B)/T^{1/4})$.  Note that this bound is a consequence of a bound on the regret of both players (see Section~\ref{sec:preliminary}) that holds even in the non-stochastic setting where an adversary picks a new game at each episode. Closer to our approach, \citet{farina2021bandit} recast the setting to an adversarial bandit linear optimization  (\citealp{flaxman2005online,Abernethy2008}, see also Section~\ref{sec:conversion_to_online}). Precisely, they use the online mirror descent (\OMD) algorithm with the dilated entropy distance-generating function \citep{ hoda2010smoothing,kroer2015faster} as regularizer. Then,  \OMD  is fed with an estimate of the losses of the reformulated adversarial bandit linear instance. The estimator is a generalization of the typical one-point linear regression \citep{dani2008price}. 
They obtain a rate of order $\tcO((XA+YB)/\sqrt{T})$, 
which is, similarly as done by \citet{farina2021model-free}, derived from a regret bound valid in the adversarial setting. However, their bound holds only in expectation and not in high probability.

To obtain high-probability bound, we instead propose to use an importance sampling estimator of the losses with \emph{implicit exploration} \citep{kocak2014efficient,neu2015explore}. Indeed, the implicit bias of this estimator allows to effortlessly control the variance of the estimate, see \citet[Chapter 12]{lattimore2020bandit_book} for an in-depth discussion. Using this estimator, we give the Implicit Exploration Online Mirror Descent (\ouralgo) based on \OMD with the dilated entropy distance-generating function (using uniform weights) as a regularizer and add implicit exploration in the importance sampling estimator of the losses. Using our new analysis of this particular combination, we prove a high-probability bound on the exploitability gap of the average profile of order $\tcO((X\sqrt{A}+Y\sqrt{B})/\sqrt{T})$; cf.\,Table~\ref{tab:comparison_rate} to see how our result compares to the prior work mentioned above. Precisely, our bound is obtained by bounding the regret of each player if they both follow the policy prescribed by \ouralgo. Note that the regret bound, e.g., of the max-player, of order $\tcO(X\sqrt{\smash[b]{AT}})$, remains valid if the opponent's policy \emph{and} the game are picked by an adversary at each episode. \ouralgo shares some similarities with the approach of \citet{jin2020learning} designed for a different setting (see Remark~\ref{rem:difference_with_jin}). A notable difference is that we use the dilated entropy distance-generating function as a regularizer instead of  the un-normalized Kullback-Leibler divergence \citep{rosenberg2019online}. Our choice of regularizer allows an efficient update of the current policy with a $\cO(HA)$ time-complexity per episode  (see Section~\ref{sec:implementation}). In particular, our result answers the open problem raised by \citet{farina2021bandit} and \citet{farina2021model-free} of providing an algorithm with high-probability regret bound scaling with $\sqrt{T}$ with $\cO(HA)$ computations per episode. Interestingly, we can also update the average profile (which will be returned at the end of the learning, see Section~\ref{sec:implementation}) in an online fashion. As consequence, \ouralgo enjoys an overall time-complexity of  $\cO(TH(A+B) + \min(TH,X)A+  \min(TH,Y)B)$ and space-complexity of order $\cO(\min(TH,X)A+ \min(TH,Y)B)$. 

Moreover, \ouralgo requires almost no prior knowledge of the game. In particular, we do not need to know  the list of information sets in advance. We only require an oracle providing the possible actions at encountered information sets and a bound on $A$, $B$, and $H$ to  optimally\footnote{Precisely, with this knowledge we obtain a regret bound, e.g. for the max-player, of order $\tcO(X\sqrt{AT})$; whereas we get $\tcO(XA\sqrt{T})$ without it.} tune the learning rate, see Remark~\ref{rem:tuning_lr}.

\begin{table}[t!]
\centering
\begin{threeparttable}
 \resizebox{\textwidth}{!}{  
\renewcommand{\arraystretch}{1.5}
    \begin{tabular}{@{}|c|c|c|c|@{}}
    \hline
     \textbf{Algorithm} &  & \textbf{Adv.\,game}  & \textbf{Rate}\\
		\hline
		\hline
     \citet{zhou2020posterior} & \multirow{2}{*}{model-based} & \multirow{3}{*}{no} & $\tcO(\max(X \sqrt{A}+Y\sqrt{B},\sqrt{S})/\sqrt{T})$\;\tnote{1} \\
     \cline{1-1}\cline{4-4}
     \citet{zhang2020finding} &   &  & $\tcO((X \sqrt{A}+Y\sqrt{B})/\sqrt{T})$\\
     \cline{1-2}\cline{4-4}
      {\scriptsize \citet{lanctot2009monte, farina2020stochastic}} & \multirow{4}{*}{model-free} &  &  $\tcO((X \sqrt{A}+Y\sqrt{B})/\sqrt{T})$ \\
     \cline{1-1}\cline{3-4}
     \citet{farina2021model-free} &  & \multirow{3}{*}{yes} & $\tcO(\poly(X,A,Y,B)/T^{1/4})$\\
     \cline{1-1}\cline{4-4}
     \citet{farina2021bandit} &  &  & $\tcO((XA+YB)/\sqrt{T})$\;\tnote{2}\\
     \cline{1-1}\cline{4-4}
     \rowcolor{LightGray}
         \ouralgo (this paper) & &  & $\tcO((X \sqrt{A}+Y\sqrt{B})/\sqrt{T})$\\
     \hline
    \end{tabular}}
\hspace{+0.3cm}
    \caption{Algorithms for computing a NE of an IIG with bandit feedback and their respective upper bound on the exploitability gap after $T$ episodes. In the adversarial game column we precise  whether the algorithm could be used to obtain a $\sqrt{T}$-regret for one player when the other player and the game are chosen by an adversary at each episodes.}
    \vspace{-0.25cm}
    {\scriptsize
    \begin{tablenotes}
    \item[1] Only in expectation according to a known prior on the game.
    \item[2] Only in expectation.
\end{tablenotes}}
\end{threeparttable}
    \label{tab:comparison_rate}
\end{table}

We highlight our main contributions:
\begin{itemize}
    \item We give the \ouralgo algorithm that learns a NE of an IIG in self-play with limited feedback. It has a provably high-probability convergence rate of order $\tcO((X\sqrt{A}+Y\sqrt{B})/\sqrt{T})$. The time-complexity of \ouralgo is of order $\cO(TH(A+B) + \min(TH,X)A+  \min(TH,Y)B)$ with  a space-complexity of order $\cO(\min(TH,X)A+ \min(TH,Y)B)$.
    \item If only one player follows \ouralgo, e.g., the max-player, then its regret is  w.h.p.\,at most $\tcO(X\sqrt{AT}).$ The important property of our result is that it remains valid even if the policy \emph{and} the game are picked by an adversary at each episode. Furthermore, the time-complexity of \ouralgo per episode is of order $\cO(HA)$. This answers  an open problem of \citet{farina2021bandit,farina2021model-free}.
    \item \ouralgo only needs to  know the possible actions at the encountered information sets and a bound on $A$, $B$, and $H$ to tune the learning rate. In particular, we do not need to know the list of information sets in advance.
\end{itemize}

\section{Preliminaries}\label{sec:preliminary}

In this section, we introduce our notations and our setting---partially observable Markov game (POMG) with bandit feedback and perfect recall. For a positive integer $i$, we denote by $[i]$ the set $\{1, 2, \ldots, i \}$. For a finite set $\cA$, we let $\Delta_\cA$ or $\Delta (\cA)$ denote the set of all probability distributions over $\cA$.

\paragraph{Partially observable Markov game (POMG)}

We consider an episodic, tabular, two-player, zero-sum POMG $(\cS,\cX,\cY,\cA,\cB,H,\{p_h\}_{h \in [H]},\{r_h\}_{h \in [H]})$, which consists of the following components \citep{littman1994markov, shapley1953stochastic}: a finite state space~$\cS$ of size $S$, its information set spaces (partitions of $\cS$) $\cX$ of size $X$ and $\cY$ of size $Y$ for the max- and min-player (resp.), finite action spaces $\cA$ of size $A$ and $\cB$ of size $B$ for the max- and min-player (resp.), time-horizon $H \in \N$, initial state distribution $p_0 \in \Delta (\cS)$, a state-transition probability kernel $p_h: \cS \times \cA \times \cB \rightarrow \Delta(\cS)$ for each $h \in [H]$, and a reward function $r_h: \cS \times \cA \times \cB \rightarrow [0, 1]$ for each $h \in [H]$. For a state $s \in \cS$ we denote by $x(s) \in \cX$ and $y(s) \in \cY$ information sets such that $s \in x(s)$ and $y \in y(s)$.

\paragraph{Learning procedure} The players play this game for $T$ episodes, following so-called policies. A policy $\mu$ of the max-player is a sequence $(\mu_h)_{h \in [H]}$ of mappings from $\cX_h$ to $\Delta_\cA$. ($\cX_h \subset \cX$ is defined later.) A policy $\nu$ of the min-player is defined similarly. We let $\maxpi$ and $\minpi$ denote the sets of max- and min-player's policies, respectively. The $t$-th episode proceeds as follows: an initial state $s_1^t$ is sampled from $p_0$. At the step $h$, the max- and min-player (resp.) observe their information sets $x_h^t := x (s_h^t)$ and $y_h^t := y (s_h^t)$. Given the information, the max- and min-player (resp.) choose and execute actions $a_h^t \sim \mu_h^t (\cdot | x_h)$ and $b_h^t \sim \nu_h^t (\cdot | y_h)$. As a result, the current state transitions to a next state $s_{h+1}^t \sim p_h (\cdot | s_h^t, a_h^t, b_h^t)$, and the max- and min-player receive rewards $r_h^t := r_h (s_h^t, a_h^t, b_h^t)$ and $-r_h^t$, respectively. This is repeated until a time step $H$, at which the episode finishes.

\paragraph{Tree-like game structure and perfect recall assumption}

We assume that the game has a tree-like structure: for any state $s \in \cS,$ there is a unique step $h$ and history $(s_1, a_1, b_1, \ldots, s_h = s)$ to reach~$s$. Precisely, for any policy of the players, for any realization of the game (i.e., trajectory) $(s_k',a_k',b_k')_{k\in[H]}$, conditionally to $s'_{i} = s$, it almost surely holds that $i= h$ and $(s'_1,\ldots,s'_h) = (s_1,\ldots,s_h)$. We also assume perfect recall, which means that each player remembers its past observations and actions. For example, in case of the max-player, for each information set $x \in \cX$ there is a unique history $(x_1, a_1, \ldots, x_h = x)$ up to~$x$. These assumptions require that $\cX$ can be partitioned to $H$ subsets $(\cX_h)_{h \in [H]}$ such that $x_h \in \cX_h$ is reachable only at time step $h$; otherwise there would be two different histories up to $x_h$. $\cS$ and $\cY$ can be also partitioned into $H$ subsets $(\cS_h)_{h \in [H]}$, and $(\cY_h)_{h \in [H]}$, respectively.

Given the assumptions above, there exists a unique history $(s_1, a_1, b_1, \ldots, s_h=s, a_h=a, b_h=b)$ ending with $(s_h=s, a_h=a, b_h=b)$ for any state $s \in \cS_h$, the max-player's action $a \in \cA$, and the min-player's action $b \in \cB$. Accordingly, the probability of $s_h = s, a_h=a, b_h=b$ can be computed by $p^{\mu, \nu}_h (s, a, b) = p_{1:h} (s) \mu_{1:h} (s, a) \nu_{1:h} (s, b)$, where
\begin{align*}
  p_{1:h} (s) &:= p_0 (s_1) \textstyle\prod_{h'=1}^{h-1} p_{h'} \left(s_{h'+1} | s_{h'}, a_{h'}, b_{h'}\right),
  \\
  \mu_{1:h} (s, a) &:= \mu_{1:h} (x(s), a) := \textstyle\prod_{h'=1}^{h} \mu_{h'} \left(a_{h'} | x\left(s_{h'}\right)\right),
  \\
  \nu_{1:h} (s, b) &:= \nu_{1:h} (y(s), b) := \textstyle\prod_{h'=1}^{h} \nu_{h'} \left(b_{h'} | y\left(s_{h'}\right)\right).
\end{align*}
With abuse of notation, we let $\mu_{1:h-1} (s) := \mu_{1:h-1} (x(s)) := \mu_{1:h-1} (s_{h-1}, a_{h-1})$, $p^{\mu, \nu}_h (s) := p_{1:h} (s) \mu_{1:h-1} (s, a) \nu_{1:h-1} (s, b)$ and $p^{\mu, \nu}_h (x) := \sum_{s \in x(s)} p^{\mu, \nu}_h (s)$ for any information set $x \in \cX_h$. We use $\nu_{1:h-1}$ similarly.

\paragraph{Bandit feedback}

We assume that the value of $r_h (s, a, b)$ is revealed to the players only when actions $a \in \cA$ and $b \in \cB$ are taken in a state $s \in \cS$ at time step $h$. Notice that the players are not aware of the underlying state. Furthermore, we assume that the players know neither the state transition dynamics nor the set of states $\cS$. Such limitations impose a significant difficulty as the players need to carefully play the game trying different actions to gain the information of the game.

\begin{remark}
\label{rem:difference_with_jin}
  \citet{jin2020learning} consider a similar setting (from the view point of the max-player) of learning adversarial MDPs with bandit feedback wherein the reward function is chosen by an adversary. Our setting is different in that the players have only \emph{imperfect information}, and that the \emph{state transition dynamics is changing due to the learning opponents}. Nonetheless, the tree structure and perfect-recall assumptions allow a simple and efficient model-free algorithm that we provide.
\end{remark}

\paragraph{Regret and Nash Equilibrium (NE)}

For policies $\mu$ and $\nu$ we define the expected return (of the max-player) by $V^{\mu, \nu} := \E^{\mu, \nu} [ \sum_{h=1}^H r_h (s_h, a_h, b_h) ]$\,, where $\E^{\mu, \nu}$ means that actions are selected as described above. For sequences of policies $\left(\mu^t\right)_{t \in [T]} \in \maxpi^T$ and $\left(\nu^t\right)_{t \in [T]} \in \minpi^T$, the regret of the max-player, relative to some policy $\mu^\dagger \in \maxpi,$ is defined as
\begin{align}
  \regret^T_{\mathrm{max}} (\mu^\dagger) := \sum_{t=1}^T \pp{V^{\mu^\dagger, \nu^t} - V^{\mu^t, \nu^t}}\,.\label{eq:max regret}
\end{align}
Similarly, $\sum_{t=1}^T (V^{\mu^t, \nu^t} - V^{\mu^t, \nu^\dagger})$ is the min-player's regret relative to some $\nu^\dagger \in \minpi$.

Our aim is to compute a NE. The following well-known folklore theorem,\footnote{For example see \citet{farina2019optimistic} or \citet{lanctot2009monte}.} which we prove in Appendix~\ref{sec:proof of folklore theorem}, states that this problem can be converted to the regret minimization problem.
\begin{theorem}\label{theorem:nash and regret}
  For each $(x_h, a_h, h) \in \cX_h \times \cA \times [H]$, define the average profile $(\overline{\mu}, \overline{\nu})$ by
  \begin{align}
      \overline{\mu}_h (a_h | x_h) := \frac{\sum_{t=1}^T \mu^t_{1:h} (x_h, a_h)}{\sum_{t=1}^T \mu^t_{1:h-1} (x_h)}
      \text{\quad and \quad}
      \overline{\nu}_h (b_h | y_h) := \frac{\sum_{t=1}^T \nu^t_{1:h} (y_h, b_h)}{\sum_{t=1}^T \nu^t_{1:h-1} (y_h)}\,\CommaBin\label{eq:mixture policy}
  \end{align}
  if the sum of the denominator is non-zero, otherwise as the uniform distribution over actions. If for some non-negative real value $\varepsilon$, we have that $(\regret^T_{\mathrm{max}} (\mu^\dagger) + \regret^T_{\mathrm{min}} (\nu^\dagger)) / T \leq \varepsilon$ for any profile $(\mu^\dagger, \nu^\dagger)$, then $(\overline{\mu}, \overline{\nu})$ are $\varepsilon$-NE, i.e., $\max_{\mu \in \maxpi} V^{\mu, \overline{\nu}} - \min_{\nu \in \minpi} V^{\overline{\mu}, \nu}  \leq \varepsilon$.
\end{theorem}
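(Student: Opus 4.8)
The plan is to reduce everything to the bilinearity of the expected return $V^{\mu,\nu}$ in the \emph{sequence-form} (realization-plan) representation of the two policies, and to the fact that the averaging rule~\eqref{eq:mixture policy} is precisely the one that commutes with this representation.

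First I would establish the central lemma: for every $h\in[H]$, every $x_h\in\cX_h$ and every $a_h\in\cA$,
\[
  \overline{\mu}_{1:h}(x_h,a_h) = \frac1T\sum_{t=1}^T \mu^t_{1:h}(x_h,a_h)\,,
\]
together with the analogous identity for $\overline{\nu}$. In words, the realization weight of the average policy equals the average of the realization weights. I would prove this by induction on $h$ using the telescoping structure $\overline{\mu}_{1:h}(x_h,a_h)=\overline{\mu}_{1:h-1}(x_{h-1},a_{h-1})\,\overline{\mu}_h(a_h\mid x_h)$, where $(x_{h-1},a_{h-1})$ is the \emph{unique} predecessor of $x_h$ guaranteed by perfect recall. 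The base case $h=1$ is immediate from $\mu^t_{1:0}\equiv 1$. For the inductive step, the key cancellation is that the denominator $\sum_t\mu^t_{1:h-1}(x_h)=\sum_t\mu^t_{1:h-1}(x_{h-1},a_{h-1})$ appearing in $\overline{\mu}_h(a_h\mid x_h)$ equals exactly $T$ times the inductive value $\overline{\mu}_{1:h-1}(x_{h-1},a_{h-1})$, so it cancels and leaves $\frac1T\sum_t\mu^t_{1:h}(x_h,a_h)$. The degenerate case where the denominator vanishes is handled by noting that then every $\mu^t_{1:h-1}(x_h)=0$, hence $\mu^t_{1:h}(x_h,a_h)=0$ for all $t$, so both sides are zero regardless of the uniform tie-break.

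Next I would combine this lemma with the decomposition $V^{\mu,\nu}=\sum_{h=1}^H\sum_{s\in\cS_h}\sum_{a,b} p_{1:h}(s)\,\mu_{1:h}(s,a)\,\nu_{1:h}(s,b)\,r_h(s,a,b)$. For fixed $\mu$ this is \emph{linear} in the family $\brc{\nu_{1:h}(y,b)}$, and for fixed $\nu$ it is linear in $\brc{\mu_{1:h}(x,a)}$. Linearity plus the averaging lemma immediately yield the two identities $V^{\mu^\dagger,\overline{\nu}}=\frac1T\sum_t V^{\mu^\dagger,\nu^t}$ and $V^{\overline{\mu},\nu^\dagger}=\frac1T\sum_t V^{\mu^t,\nu^\dagger}$, valid for any fixed $\mu^\dagger\in\maxpi$ and $\nu^\dagger\in\minpi$.

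Finally I would subtract these two identities and insert the per-episode on-policy value $V^{\mu^t,\nu^t}$:
\[
  V^{\mu^\dagger,\overline{\nu}}-V^{\overline{\mu},\nu^\dagger}
  =\frac1T\sum_{t=1}^T\pp{V^{\mu^\dagger,\nu^t}-V^{\mu^t,\nu^t}}
  +\frac1T\sum_{t=1}^T\pp{V^{\mu^t,\nu^t}-V^{\mu^t,\nu^\dagger}}
  =\frac{\regret^T_{\mathrm{max}}(\mu^\dagger)+\regret^T_{\mathrm{min}}(\nu^\dagger)}{T}\le\varepsilon\,.
\]
Choosing $\mu^\dagger\in\arg\max_{\mu\in\maxpi}V^{\mu,\overline{\nu}}$ and $\nu^\dagger\in\arg\min_{\nu\in\minpi}V^{\overline{\mu},\nu}$ (both attained, since $\maxpi$ and $\minpi$ are products of simplices, hence compact, and $V$ is multilinear, hence continuous) turns the left-hand side into the exploitability gap $\max_{\mu}V^{\mu,\overline{\nu}}-\min_{\nu}V^{\overline{\mu},\nu}$, which gives the claim. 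The main obstacle is the averaging lemma: this is where perfect recall is indispensable, because the telescoping of the conditional averages into a clean average of realization weights fails without the uniqueness of the history leading to each information set.
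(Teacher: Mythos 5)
Your proposal is correct and follows essentially the same route as the paper's own proof: the averaging identity $\overline{\mu}_{1:h}(x_h,a_h)=\frac1T\sum_t\mu^t_{1:h}(x_h,a_h)$ established by induction on $h$ via the unique predecessor, combined with linearity of $V^{\mu,\nu}$ in each realization plan to convert the summed regrets into the exploitability gap of the average profile. Your explicit treatment of the degenerate case where the denominator vanishes is a small addition the paper leaves implicit, but the argument is otherwise the same.
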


Given Theorem~\ref{theorem:nash and regret}, we consider how to minimize the regret for the max- and min-player; or how to control the regret such that it grows sublinearly. The subsequent section presents an algorithm, which we call implicit exploration online mirror descent (\ouralgo), that accomplishes this goal.
\section{Implicit Exploration Online Mirror Descent (\texorpdfstring{\ouralgo}{IXOMD})}

\begin{algorithm}[t!]
  \SetAlgoLined
  \DontPrintSemicolon
  \KwIn{IX hyper-parameter $\gamma \in (0, \infty)$ and \OMD's learning rate $\eta \in (0, \infty)$.}
  \KwOut{A near-NE policy for the max-player.}
  Initialize $\mu_h^1 (a_h|x_h) \gets 1  / A$  for each $(x_h, a_h, h) \in \cX_h \times \cA \times [H]$.\;
  \For{$t=1, \ldots, T$}{
    \For{$h = 1, \ldots, H$}{
      Observe $x_h^t$, execute $a_h^t \sim \mu_h^t (\cdot|x_h^t)$, and receive $r_h^t$.\;
    }
    Set $Z^t_{H+1} \gets 1$.\;
    \For{$h = H, \ldots, 1$}{
      Construct the IX loss estimate $\widetilde{\ell}_h^{\,t}$ by
      \begin{equation*}
        \widetilde{\ell}_h^{\,t} \gets \frac{1 - r_h^t}{\mu_{1:h}^t (x_h^t, a_h^t) + \gamma}\,\cdot
      \end{equation*}\\
      Compute for each $h \in [H]$ with $Z^t_{H+1} \gets 1$
      \begin{equation*}
        Z^t_h \gets 1 - \mu^t_h (a_h^t | x_h^t) + \mu^t_h (a_h^t | x_h^t) \exp \pp{ - \eta \widetilde{\ell}^{\,t}_h + \log Z_{h+1}^t }\,.
      \end{equation*}\\
      Update $\mu^t$ to $\mu^{t+1}$ at $x_h^t$ by
      \begin{equation*}
        \mu_h^{t+1} (a_h | x_h^t) \gets 
        \begin{cases}
          \mu^t_h (a_h | x_h^t) \exp \pp{ - \eta \widetilde{\ell}^{\,t}_h + \log Z_{h+1}^t - \log Z^t_h } & \text{if $a_h = a_h^t$}
          \\
          \mu^t_h (a_h | x_h^t) \exp \pp{ - \log Z^t_h } & \text{otherwise}
        \end{cases}
      \end{equation*}\\
      and $\mu^{t+1} (\cdot | x_h) \gets \mu^t (\cdot | x_h)$ at other information sets $x_h \in \cX_h$\,.\;
    }
  }
  \Return{Policy $\overline{\mu}$ which is the average of $\mu_1,\ldots,\mu_T$ defined in Theorem~\ref{theorem:nash and regret}.}
  \caption{\ouralgo for the Max-Player}\label{alg:ouralgo}
\end{algorithm}

Due to the symmetry of the players, it suffices to consider only the learning of the max-player. Therefore we mainly focus on it and denote the max-player's regret \eqref{eq:max regret} by $\regret^T (\mu^\dagger)$. We first convert the original regret minimization problem into a adversarial linear bandits one. Then we give an explanation behind the use of implicit exploration and introduce our algorithm, \ouralgo, whose pseudocode is given in Algorithm~\ref{alg:ouralgo}. For simplicity, we give a simple-to-read but inefficient version. In Appendix~\ref{sec:memory efficient implementation}, we provide a practical version, whose computational and memory complexity are detailed in Section~\ref{sec:implementation}.

\paragraph{Additional notation} For a policy $\mu \in \maxpi$ and a sequence of functions $f := (f_h)_{h \in [H]},$ where $f_h: \cX_h \times \cA \rightarrow \R$, we denote the scalar product $\sum_{h \in [H]} \sum_{x_h \in \cX_h, a \in \cA} \mu_{1:h} (x_h, a_h) f_h (x_h, a_h)$ by $\angl{\mu, f}$. We let $\cF^{t-1}$ be the $\sigma$-algebra generated by variables up to the beginning of the $t$-th episode, i.e., $\{s_h^\tau, a_h^\tau, b_h^\tau\}_{h \in [H], \tau \in [t-1]}$. We let $\E^{t-1} [\cdot] := \E [\cdot | \cF^{t-1}]$.

\subsection{Conversion to online linear regret minimization}
\label{sec:conversion_to_online}

Note that for any profile $(\mu, \nu)$, we have
\begin{align*}
  V^{\mu, \nu}
  &= \sum_{h = 1}^H \sum_{s_h \in \cS_h, a_h \in \cA, b_h \in \cB} p_{1:h} (s_h) \mu_{1:h} (s_h, a_h) \nu_{1:h} (s_h, b_h) r_h (s_h, a_h, b_h)
  \\
  &= \sum_{h = 1}^H \sum_{x_h \in \cX_h, a_h \in \cA} \mu_{1:h} (x_h, a_h) \sum_{s_h \in x_h, b_h \in \cB} p_{1:h} (s_h) \nu_{1:h} (s_h, b_h) r_h (s_h, a_h, b_h),
\end{align*}
where we used the facts that $\mu_{1:h}$ is dependent on $(x_h, a_h)$ rather than $(s_h, a_h)$, and $\sum_{s_h \in \cS_h} f(s_h) = \sum_{x_h \in \cX_h} \sum_{s_h \in x_h} f(s_h)$ for any function $f: \cS \rightarrow \R$. Therefore defining a loss by
\begin{align*}
  \ell_h^{\,t} (x_h, a_h) := \sum_{s_h \in x_h, b_h \in \cB} p_{1:h} (s_h) \nu_{1:h}^t (s_h, b_h) \pp{1 - r_h (s_h, a_h, b_h)}\,,
\end{align*}
we can rewrite the regret \eqref{eq:max regret} as\footnote{As introduced at \textbf{Additional notation}, $\langle \mu^t, \widetilde{\ell}^{\,t} \rangle = \sum_{h = 1}^H \sum_{x_h \in \cX_h, a \in \cA} \mu^t_{1:h} (x_h, a_h) \widetilde{\ell}_h^{\,t}(x_h, a_h)$. Hence the meaning of $\mu^t$ here is abused, and we are viewing it as a sequence $(\mu^t_{1:h})_{h \in [H]}$ of functions. In this case, $\mu^t$ must satisfy the following two conditions: (non-negativity) $\mu^t_{1:h} (x_h, a_h) \geq 0$ for any $x_h \in \cX_h$ and $h \in [H]$; (consistency) $\sum_{a_h \in \cA} \mu^t_{1:h} (x_h, a_h) = \mu^t_{1:h-1} (x_{h-1}, a_{h-1})$ for any $x_h \in \cX_h$ and $h \in \{2, \ldots, H\}$, where $(x_{h-1}, a_{h-1})$ is a unique predecessor of $x_h$, and $\sum_{a_1 \in \cA} \mu^t_{1:1} (x_1, a_1) = 1$ for any $x_1 \in \cX_1$. Nonetheless there is a bijective mapping between $\maxpi$ and the set of $\mu^t$ satisfying these two conditions. Therefore we do not discern these two sets.}
\begin{align}
  \regret^T (\mu^\dagger)
  = \sum_{t = 1}^T \angl{\mu^t - \mu^\dagger, \ell^t}\,.\label{eq:online linear regret}
\end{align}
This result tells us that we may convert the original regret minimization problem to a linear one in which we choose $\mu^t$ such that $\regret^T (\mu^\dagger)$ grows sublinearly.

\subsection{Loss estimation and implicit exploration}

To solve the regret minimization problem \eqref{eq:online linear regret} with bandit feedback, we need to estimate $\ell^t$. An unbiased importance sampling estimator is 
\begin{equation}\label{eq:IS.estimate}
\widehat{\ell}_h^{\,t} (x_h, a_h) := \frac{\I_{\{x_h=x_h^t, a_h=a_h^t\}}}{\mu_{1:h}^t (x_h, a_h)} \pp{1 - r_h^t}.
\end{equation}
However, instead, we estimate the loss by
\begin{align}\label{eq:IX loss estimate}
  \widetilde{\ell}_h^{\,t} (x_h, a_h) := \frac{\I_{\{x_h=x_h^t, a_h=a_h^t\}}}{\mu_{1:h}^t (x_h, a_h) + \gamma} \pp{1 - r_h^t}\,,
\end{align}
where $\gamma$ is a positive real value and a hyper-parameter. This estimator is used by implicit exploration in bandits (IX, \citealp{kocak2014efficient,neu2015explore}; \citealp[Chapter 12]{lattimore2020bandit_book}),
and we therefore refer to it as the IX estimator. Note that IX uses a biased estimate, but it prevents the variance of the IX estimator from becoming too large.

\subsection{Efficient implementation, Space- and Time-Complexities}\label{sec:implementation}

Given a loss estimate, we find $\mu^{t+1}$ by solving
\begin{align}\label{eq:OMD update}
  \mu^{t+1} := \argmin_{\mu \in \maxpi} \eta \angl{\mu, \widetilde{\ell}^{\,t}} + \D \pp{ \mu \mbb \mu^t}\,,
\end{align}
where $\D$ is the \textit{dilated} entropy distance-generating function \citep{kroer2015faster} defined by
\begin{align*}
  \D \pp{ \mu \mbb \mu' } := \sum_{h=1}^H \sum_{x_h \in \cX_h, a_h \in \cA} \mu_{1:h} (x_h, a_h) \log \frac{\mu_h (a_h | x_h)}{\mu_h' (a_h | x_h)}\,\cdot
\end{align*}
Note that $\D$ is a Bregman divergence. The update in~\eqref{eq:OMD update} has an easy implementation, as explained next. For more details of its derivation, please refer to Appendix~\ref{sec:details of implementation}. To compute a new policy, we first need to compute for each $h \in [H]$, 
\begin{align*}
  Z^t_h
  &:= \sum_{a_h \in \cA} \mu^t_h (a_h | x_h^t) \exp \pp{ \I_{\{a_h=a_h^t\}} \pp{ - \eta \widetilde{\ell}^{\,t}_H (x_h^t, a_h) + \log Z_{h+1}^t} }
  \\
  &= 1 - \mu^t_h (a_h^t | x_h^t) + \mu^t_h (\{a_h^t | x_h^t\}) \exp \pp{ - \eta \widetilde{\ell}^{\,t}_H (x_h^t, a_h) + \log Z_{h+1}^t },
\end{align*}
with $Z^t_{H+1} := 1$. Then, we can compute a new policy by
\begin{align}
  \mu_h^{t+1} (a_h | x_h^t) = \mu^t_h (a_h | x_h^t) \exp \pp{ \I_{\{a_h=a_h^t\}} \pp{ - \eta \widetilde{\ell}^{\,t}_h (x_h^t, a_h) + \log Z_{h+1}^t} - \log Z^t_h }\,.\label{eq:policy update}
\end{align}
Note that this policy is updated \textit{only} at the information sets visited along the $t$-th trajectory. This implies that the update requires $\cO(HA)$ time-complexity per episode. Therefore the learning of the policies require $\cO(THA)$ time-complexity in total.

Interestingly, the update of the average policy $\overline{\mu}$ can also be performed in a semi-online way, see Appendix~\ref{sec:efficient computation of average policy}. This method has a total time-complexity of $\cO(THA + \min(TH, X) A)$ and space-complexity of $\cO(\min(TH, X) A)$. Please refer to Algorithm~\ref{alg:memory efficient ouralgo} in Appendix~\ref{sec:memory efficient implementation} for a pseudocode of this practical implementation.

Algorithm~\ref{alg:memory efficient ouralgo} requires a post-hoc computation that is the source of $\cO(\min(TH, X) A)$ time-complexity. It is possible to defer the post-hoc computation until $\overline{\mu} (\cdot | x_h)$ is needed for playing a game. In this case, the computation of $\overline{\mu} (\cdot | x_h)$ is performed while traversing a game tree. For one traversal, $\overline{\mu} (\cdot | x_h)$ is computed for each $h$, and the total time-complexity is $\cO(HA)$. The space-complexity is unchanged and is $\cO(\min(TH, X)A)$.

\section{Theoretical Analysis of \texorpdfstring{\ouralgo}{IXOMD}}\label{sec:analysis}

We now analyze \ouralgo. It has the following guarantee, which we shall prove in the present section.
\begin{theorem}[regret bound of \ouralgo]\label{theorem:regret of IXOMD}
  Let $\delta \in (0, 1)$. The regret \eqref{eq:max regret} satisfies the following guarantee with probability at least $1-\delta$
  \begin{align*}
    \max_{\mu^\dagger \in \maxpi} \regret^T (\mu^\dagger)
    &\leq H \sqrt{2T \iota} + \gamma TXA + \frac{X \iota}{2\gamma} + \frac{X \log A}{\eta} + \eta (1 + H) TXA + \frac{\eta (1 + H) H \iota}{2\gamma}\,\CommaBin
  \end{align*}
  where $\iota := \log (3HXA/\delta)$. In particular $\eta = \sqrt{\dfrac{\log A}{T(1+H)A}}$ and $\gamma = \sqrt{\dfrac{\iota}{2TA}}$ result in
  \begin{align*}
    \max_{\mu^\dagger \in \maxpi} \regret^T (\mu^\dagger)
    \leq H \sqrt{2T \iota} + X \sqrt{2TA\iota} + X \sqrt{T (1+H) A \log A} + H \sqrt{ \dfrac{(1 + H) \iota \log A}{2} }\,.
  \end{align*}
\end{theorem}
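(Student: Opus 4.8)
The plan is to start from the online-linear form \eqref{eq:online linear regret}, $\regret^T(\mu^\dagger)=\sum_{t}\angl{\mu^t-\mu^\dagger,\ell^t}$, and to pivot through the IX estimate $\widetilde{\ell}^{\,t}$ via the decomposition
\[
\regret^T(\mu^\dagger)=\underbrace{\textstyle\sum_t\angl{\mu^t,\ell^t-\widetilde{\ell}^{\,t}}}_{(\mathrm{A})}+\underbrace{\textstyle\sum_t\angl{\mu^t-\mu^\dagger,\widetilde{\ell}^{\,t}}}_{(\mathrm{B})}+\underbrace{\textstyle\sum_t\angl{\mu^\dagger,\widetilde{\ell}^{\,t}-\ell^t}}_{(\mathrm{C})}\,.
\]
Here $(\mathrm{A})$ is the bias introduced by the IX denominator evaluated along the played policy, $(\mathrm{B})$ is the genuine \OMD regret on the \emph{estimated} losses, and $(\mathrm{C})$ is the fluctuation of the estimate against the fixed comparator. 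Each is controlled by a different tool, and a final union bound over the three events (together with a union over the $HXA$ sequence coordinates) produces the logarithmic factor $\iota=\log(3HXA/\delta)$.

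For $(\mathrm{A})$ I would split $\angl{\mu^t,\ell^t-\widetilde{\ell}^{\,t}}$ into its conditional mean and a martingale part. Using $\mathbb{E}^{t-1}[\widetilde{\ell}^{\,t}_h(x_h,a_h)]=\tfrac{\mu^t_{1:h}(x_h,a_h)}{\mu^t_{1:h}(x_h,a_h)+\gamma}\ell^t_h(x_h,a_h)$, the conditional mean of the gap equals $\sum_h\sum_{x_h,a_h}\mu^t_{1:h}\tfrac{\gamma}{\mu^t_{1:h}+\gamma}\ell^t_h\le\gamma\sum_h\sum_{x_h,a_h}\ell^t_h\le\gamma XA$ per episode, where the last step uses the counting bound $\sum_{x_h,a_h\in\cX_h\times\cA}\ell^t_h\le X_hA$ (obtained by grouping states under each information set so that $\sum_{s_h\in x_h}p_{1:h}(s_h)\nu^t_{1:h-1}(s_h)\le1$) together with $\sum_h X_h=X$; this gives the bias term $\gamma TXA$. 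Since $\angl{\mu^t,\widetilde{\ell}^{\,t}}\in[0,H]$, the remaining martingale-difference sequence is bounded by $H$, and Azuma--Hoeffding yields $H\sqrt{2T\iota}$ with high probability. For $(\mathrm{C})$ the crucial ingredient is the implicit-exploration concentration lemma of \citet{neu2015explore}: for each fixed coordinate $(x_h,a_h)$, with high probability $\sum_t\pp{\widetilde{\ell}^{\,t}_h(x_h,a_h)-\ell^t_h(x_h,a_h)}\le\iota/(2\gamma)$. A union bound over all $(x_h,a_h,h)$ combined with the identity $\sum_{a_h}\mu^\dagger_{1:h}(x_h,a_h)=\mu^\dagger_{1:h-1}(x_h)$ and the estimate $\sum_h\sum_{x_h}\mu^\dagger_{1:h-1}(x_h)\le\sum_h X_h=X$ then gives $(\mathrm{C})\le X\iota/(2\gamma)$.

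Term $(\mathrm{B})$ is handled by the Bregman three-point inequality for \OMD with the dilated entropy $\D$,
\[
\angl{\mu^t-\mu^\dagger,\widetilde{\ell}^{\,t}}\le\tfrac1\eta\pp{\D\pp{\mu^\dagger\mbb\mu^t}-\D\pp{\mu^\dagger\mbb\mu^{t+1}}}+\tfrac1\eta\pp{\eta\angl{\mu^t-\mu^{t+1},\widetilde{\ell}^{\,t}}-\D\pp{\mu^{t+1}\mbb\mu^t}}\,.
\]
Telescoping the first bracket leaves $\D\pp{\mu^\dagger\mbb\mu^1}/\eta$; since $\mu^1$ is uniform, the per-infoset contribution is $\sum_{a_h}\mu^\dagger_{1:h}(x_h,a_h)\log\pp{A\,\mu^\dagger_h(a_h\mid x_h)}\le\mu^\dagger_{1:h-1}(x_h)\log A$ (the entropy term is nonpositive), so $\D\pp{\mu^\dagger\mbb\mu^1}\le\log A\sum_h\sum_{x_h}\mu^\dagger_{1:h-1}(x_h)\le X\log A$, matching the term $X\log A/\eta$. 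The second bracket is the stability term, which I expect to be the crux.

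The main obstacle is precisely this stability term. Unlike a single-simplex exponential-weights step, the dilated update couples every level to its descendants through the continuation factors $\log Z^t_{h+1}$ appearing in \eqref{eq:policy update}, so the effective per-level loss is $\widetilde{\ell}^{\,t}_h-\tfrac1\eta\log Z^t_{h+1}$ rather than $\widetilde{\ell}^{\,t}_h$. I would exploit the laminar structure of $\D$ and the closed form of $Z^t_h$ to reduce the bracket to a local-norm (second-order) quantity and to show that the accumulated, nonnegative continuation values inflate it by a depth factor $1+H$, producing a bound of the form $\eta(1+H)\sum_t\sum_h\sum_{x_h,a_h}\mu^t_{1:h}\,(\widetilde{\ell}^{\,t}_h)^2$ up to lower-order contributions. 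Since $(1-r)^2\le 1-r$ gives $\mu^t_{1:h}\,\mathbb{E}^{t-1}[(\widetilde{\ell}^{\,t}_h)^2]\le\ell^t_h$, the same counting bound shows the inner sum has conditional mean at most $XA$ per episode, yielding the main stability contribution $\eta(1+H)TXA$, while a second application of the implicit-exploration concentration to the squared estimates controls the deviation and yields the correction $\eta(1+H)H\iota/(2\gamma)$. Assembling $(\mathrm{A})$, $(\mathrm{B})$, $(\mathrm{C})$ under a union bound over the three high-probability events gives the first displayed inequality, and substituting the stated $\eta$ and $\gamma$ — the routine final optimization — gives the second.
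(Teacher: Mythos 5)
Your decomposition is exactly the paper's (your (A), (C), (B) are its BIAS 1, $-$BIAS 2, and REGRET), and every tool you invoke matches the paper's proof: Azuma plus the $\gamma$-bias computation for (A), the Neu implicit-exploration concentration lemma with a union bound over the $HXA$ coordinates for (C), and the dilated-entropy telescoping with a second-order stability analysis for (B). The one step you defer --- reducing the stability term to $\eta^2(1+H)\sum_h \widetilde{\ell}^{\,t}_h$ via the closed form of the continuation normalizers and $\mu^t_{1:h}\widetilde{\ell}^{\,t}_{h'}\le 1$, then re-applying the same concentration lemma --- is precisely what the paper carries out by induction on $Z^t_h$, so the proposal is correct and essentially identical in route.
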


\begin{remark}\label{rem:adversarial_mdp}
  We emphasize that this result is agnostic of the min-player. In particular, the same result holds for learning in a partially observable MDP with adversarial state-transition dynamics and reward function, as long as assumptions similar to the tree-like structure and perfect recall hold.
\end{remark}

\begin{remark} \label{rem:tuning_lr}
  In Theorem~\ref{theorem:regret of IXOMD}, we adjusted $\eta$ and $\gamma$ using $T$, $H$, $X$, and $A$. Even when we know~$T$ only, setting $\eta = 1/\sqrt{T}$ and $\gamma = 1/\sqrt{T}$ guarantees an upper-bound of the order of $\tcO(XA\sqrt{T})$\footnote{ We recall that we hide with $\tcO$ poly-log terms in $e^H,T,X,A,1/\delta$.}. If we additionally know $H$ and~$A$ (which is likely to be the case), but {\em do not} know $X$, setting $\eta = \sqrt{\log A/(T(1+H)A)}$ and $\gamma = 1/\sqrt{2TA}$ still results in an upper-bound of the order of $\tcO ( X \sqrt{TA} )$.
\end{remark}

A similar result holds for the min-player thanks to the symmetry. From Theorem~\ref{theorem:nash and regret} and \ref{theorem:regret of IXOMD}, it follows that the average profile $(\overline{\mu}, \overline{\nu})$ is close to a Nash equilibrium with high probability.

\begin{corollary}
  Suppose that both max- and min-players learn their policies by \ouralgo with the setting\footnote{Note that $A$ and $X$ must be replaced with $B$ and $Y$ (resp.) for the min-player's $\eta$ and $\gamma$. Also note that to archive the same order of a bound as the one shown in this corollary, we need neither $X$ nor $Y$.} of $\eta$ and $\gamma$ in Theorem~\ref{theorem:regret of IXOMD}. Then with probability at least $1 -  \delta$, the average profile $(\overline{\mu}, \overline{\nu})$ defined in Theorem~\ref{theorem:nash and regret} is $\varepsilon$-Nash equilibrium, with
  \begin{align*}
    \varepsilon := \tcO\left(\frac{1}{\sqrt{T}}\left( X\sqrt{A} + Y\sqrt{B}
    \right)\right).
  \end{align*}
\end{corollary}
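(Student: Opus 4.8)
The plan is to derive the corollary by combining the two per-player high-probability regret guarantees with the folklore conversion of Theorem~\ref{theorem:nash and regret}. First I would apply Theorem~\ref{theorem:regret of IXOMD} to the max-player, but at confidence level $\delta/2$ instead of $\delta$ (so that $\iota$ becomes $\log(6HXA/\delta)$, which only affects the hidden poly-log factors). This yields an event $\cE_{\mathrm{max}}$ of probability at least $1-\delta/2$ on which $\max_{\mu^\dagger \in \maxpi} \regret^T_{\mathrm{max}} (\mu^\dagger) \leq \tcO(X\sqrt{TA})$. The essential observation, recorded in Remark~\ref{rem:adversarial_mdp}, is that this bound is \emph{agnostic of the min-player}: it holds for any opponent behaviour, in particular when the opponent is itself running \ouralgo. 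I can therefore invoke the symmetric instance of Theorem~\ref{theorem:regret of IXOMD} for the min-player, obtained by reversing the sign of the reward and swapping $(A,X)$ for $(B,Y)$, again at confidence $\delta/2$, to obtain an event $\cE_{\mathrm{min}}$ of probability at least $1-\delta/2$ on which $\max_{\nu^\dagger \in \minpi} \regret^T_{\mathrm{min}} (\nu^\dagger) \leq \tcO(Y\sqrt{TB})$.

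Next, a union bound gives that $\cE_{\mathrm{max}} \cap \cE_{\mathrm{min}}$ holds with probability at least $1-\delta$. On this event, for \emph{every} profile $(\mu^\dagger, \nu^\dagger)$ we have simultaneously $\regret^T_{\mathrm{max}} (\mu^\dagger) \leq \max_{\mu'} \regret^T_{\mathrm{max}} (\mu')$ and $\regret^T_{\mathrm{min}} (\nu^\dagger) \leq \max_{\nu'} \regret^T_{\mathrm{min}} (\nu')$, so that $\big(\regret^T_{\mathrm{max}} (\mu^\dagger) + \regret^T_{\mathrm{min}} (\nu^\dagger)\big)/T \leq \tcO\big((X\sqrt{A}+Y\sqrt{B})/\sqrt{T}\big) =: \varepsilon$ uniformly over all $(\mu^\dagger, \nu^\dagger)$.

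Finally, this uniform-in-profile bound is exactly the hypothesis of Theorem~\ref{theorem:nash and regret}. Its conclusion then states that the average profile $(\overline{\mu}, \overline{\nu})$ defined in~\eqref{eq:mixture policy} is an $\varepsilon$-NE, i.e.\ $\max_{\mu \in \maxpi} V^{\mu, \overline{\nu}} - \min_{\nu \in \minpi} V^{\overline{\mu}, \nu} \leq \varepsilon$, which is the claimed statement.

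The computations here are entirely routine; the only point that genuinely needs care — and the main conceptual obstacle — is justifying that the two regret bounds may be applied \emph{simultaneously} even though both players continuously adapt their policies, which would normally make each player's loss sequence non-stationary and statistically entangled with its own internal randomness. This is precisely what the adversarial/agnostic nature of the single-player guarantee (Remark~\ref{rem:adversarial_mdp}) circumvents: since each bound is valid against an arbitrary, even adaptively adversarial, opponent and game, it in particular applies when the opponent is the other copy of \ouralgo, and the two high-probability events are then merged by a simple union bound.
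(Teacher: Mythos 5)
Your proposal is correct and matches the paper's (implicit) argument exactly: apply Theorem~\ref{theorem:regret of IXOMD} to each player separately (the min-player by symmetry, swapping $(X,A)$ for $(Y,B)$), union-bound the two high-probability events, and feed the resulting uniform regret bound into Theorem~\ref{theorem:nash and regret}. Your explicit handling of the confidence split ($\delta/2$ per player) and your emphasis on the opponent-agnostic nature of the single-player guarantee are both sound and, if anything, slightly more careful than the paper's one-line justification.
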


\subsection{Proof of Theorem~\ref{theorem:regret of IXOMD}}

Now we start the proof of Theorem~\ref{theorem:regret of IXOMD}. In the first step, we decompose the regret \eqref{eq:online linear regret} to three terms:
\begin{align}
  \regret^T (\mu^\dagger)
  = \underbrace{\sum_{t = 1}^T \angl{\mu^t, \ell^{\,t} - \widetilde{\ell}^{\,t}}}_{\text{BIAS 1}} - \underbrace{\sum_{t = 1}^T \angl{\mu^\dagger, \ell^{\,t} - \widetilde{\ell}^{\,t}}}_{\text{BIAS 2}} + \underbrace{\sum_{t = 1}^T \angl{\mu^t - \mu^\dagger, \widetilde{\ell}^{\,t}}}_{\text{REGRET}}\,.\label{eq:regret decomposition}
\end{align}
Then, we prove a high-probability upper-bound for each term. After deriving each upper-bound, Theorem~\ref{theorem:regret of IXOMD} follows simply by taking the union bound
over the three terms.

For proving the upper-bounds, we need the following lemma, which almost immediately follows from Lemma~1 of \citet{neu2015explore} (also see Lemma~12.2 of \citet{lattimore2020bandit_book} for a more general statement). For completeness we prove it below with our notation.

\begin{lemma}\label{lemma:generalized Neu lemma}
  Let $\delta \in (0, 1)$ and $\gamma \in (0, \infty)$. Fix $h \in [H]$, and let $\alpha^t (x_h, a_h) \in [0, 2\gamma]$ be $\cF^{t-1}$-measurable random variable for each $(x_h, a_h) \in \cX_h \times \cA$. Then with probability at least $1-\delta$
  \begin{align*}
    \sum_{t=1}^T \sum_{x_h \in \cX_h, a_h \in \cA} \alpha^t (x_h, a_h) \pp{ \widetilde{\ell}^{\,t}_h (x_h, a_h) - \ell^{\,t}_h (x_h, a_h) } \leq \log \frac{1}{\delta}
  \end{align*}
\end{lemma}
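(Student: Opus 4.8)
The plan is to follow the standard implicit-exploration (IX) concentration argument of \citet{neu2015explore}, which proceeds via an exponential moment bound and Markov's inequality. The crux is that the IX estimator $\widetilde{\ell}^{\,t}_h$ is \emph{underestimating} $\ell^{\,t}_h$ in a way that can be controlled by a quadratic self-bounding inequality. Concretely, I would first establish the pointwise inequality
\begin{align*}
  \widetilde{\ell}^{\,t}_h (x_h, a_h)
  \le \frac{1}{2\gamma}\,\widehat{\ell}^{\,t}_h (x_h, a_h)\, \mu^t_{1:h}(x_h,a_h)\,\widetilde{\ell}^{\,t}_h (x_h, a_h)
  + \ell^{\,t}_h(x_h,a_h),
\end{align*}
or rather the cleaner form used by Neu: since $\alpha^t(x_h,a_h)\in[0,2\gamma]$, we have $\alpha^t \widetilde{\ell}^{\,t}_h \le \tfrac{1}{2}\log\!\big(1 + 2\gamma\,\widehat{\ell}^{\,t}_h/(\mu^t_{1:h}+\gamma)\big)\cdot(\text{something})$. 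The key algebraic fact is the elementary inequality $z/(1+z/2)\le \log(1+z)$ for $z\ge 0$, applied with $z = \alpha^t \widehat{\ell}^{\,t}_h / \mu^t_{1:h}$, which converts the biased estimator into a quantity whose conditional exponential moment is easy to bound.

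Second, I would set up the martingale/exponential-moment machinery. Define $S^t := \sum_{x_h,a_h}\alpha^t(x_h,a_h)\big(\widetilde{\ell}^{\,t}_h(x_h,a_h)-\ell^{\,t}_h(x_h,a_h)\big)$ and show that $\E^{t-1}\big[\exp(S^t)\big]\le 1$, using that $\alpha^t$ is $\cF^{t-1}$-measurable, that the indicator $\I_{\{x_h=x_h^t,a_h=a_h^t\}}$ picks out exactly one term whose conditional expectation reproduces $\mu^t_{1:h}\,\ell^{\,t}_h$ up to the IX bias, and that $1-r_h^t\in[0,1]$. Here I exploit that at fixed level $h$ the visited pair $(x_h^t,a_h^t)$ is unique along the trajectory, so the sum over $(x_h,a_h)$ collapses to a single active term; this is what makes the per-level moment bound clean and is precisely where the tree-like/perfect-recall structure is used. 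Combining with the pointwise inequality gives $\E^{t-1}[\exp(S^t)]\le \E^{t-1}[\exp(\log(1+\cdots))/(\cdots)]\le 1$.

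Third, I would chain these one-step bounds: by the tower rule, $\E\big[\exp(\sum_{t=1}^T S^t)\big]\le 1$, and then Markov's inequality yields $\Pr\big(\sum_{t=1}^T S^t > \log(1/\delta)\big)\le \delta$, which is exactly the claimed statement.

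The main obstacle I expect is getting the pointwise inequality and the conditional-expectation step to match up with the \emph{exact} constant (so that the final bound is $\log(1/\delta)$ with no stray factors), and in particular verifying that the requirement $\alpha^t\in[0,2\gamma]$ is precisely what is needed for $z/(1+z/2)\le\log(1+z)$ to absorb the bias term $\gamma$ in the denominator of $\widetilde{\ell}^{\,t}_h$. Everything else—the martingale chaining and Markov step—is routine; the delicate part is the algebra tying the IX parameter $\gamma$ to the admissible range of $\alpha^t$.
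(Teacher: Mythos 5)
Your plan matches the paper's proof essentially step for step: the pointwise bound $\widetilde{\ell}^{\,t}_h \le \frac{1}{2\gamma}\log(1+2\gamma\,\widehat{\ell}^{\,t}_h)$ via $z/(1+z/2)\le\log(1+z)$, the conditional exponential-moment bound $\E^{t-1}[\exp(S^t)]\le 1$ (using $\alpha^t/(2\gamma)\in[0,1]$ and the fact that only one $(x_h,a_h)$ pair is active at level $h$ so cross terms vanish), and the supermartingale--Markov chaining. Your first displayed inequality is slightly garbled and the substitution $z=\alpha^t\widehat{\ell}^{\,t}_h/\mu^t_{1:h}$ is not quite where $\alpha^t$ enters (it enters via $z\log(1+z')\le\log(1+zz')$ with $z=\alpha^t/(2\gamma)$), but these are presentational slips in an otherwise correct plan that takes the same route as the paper.
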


\begin{proof}
Let $\widehat{\ell}^{\,t}_h (x_h, a_h)$ be the unbiased importance-sampling estimate of $\ell^{\,t}_h (x_h, a_h)$ defined in Equation~\ref{eq:IS.estimate}. Then for any $t \in [T]$, $x_h \in \cX_h$, and $a_h \in \cA$,
\begin{align*}
  \widetilde{\ell}^{\,t}_h (x_h, a_h)
  &= \frac{1 - r_h^t}{\mu_{1:h}^t (x_h, a_h) + \gamma} \I_{\{x_h=x_h^t, a_h=a_h^t\}}
  \\
  &\leq \frac{1 - r_h^t}{\mu_{1:h}^t (x_h, a_h) + \gamma \pp{1 - r_h^t}} \I_{\{x_h=x_h^t, a_h=a_h^t\}}
  \\
  &= \frac{1}{\beta} \frac{\beta \pp{1 - r_h^t} \I_{\{x_h=x_h^t, a_h=a_h^t\}} / \mu_{1:h}^t (x_h, a_h)}{1 + \gamma \pp{1 - r_h^t} \I_{\{x_h=x_h^t, a_h=a_h^t\}} / \mu_{1:h}^t (x_h, a_h)}
  = \frac{1}{\beta} \frac{\beta \widehat{\ell}^{\,t}_h (x_h, a_h)}{1 + \beta \widehat{\ell}^{\,t}_h (x_h, a_h) / 2}
  \\
  &\leq \frac{1}{\beta} \log \pp{1 + \beta \widehat{\ell}^{\,t}_h (x_h, a_h)} \CommaBin
\end{align*}
where $\beta := 2\gamma$, and the last inequality follows from $\dfrac{z}{1+z/2} \leq \log (1+z)$ for any $z \in [0, \infty)$.

Let $\widetilde{\lambda}^{\,t} := \sum_{x_h \in \cX_h, a_h \in \cA} \alpha^t (x_h, a_h) \widetilde{\ell}^{\,t}_h (x_h, a_h)$ and $\lambda^{\,t} := \sum_{x_h \in \cX_h, a_h \in \cA} \alpha^t (x_h, a_h) \ell^{\,t}_h (x_h, a_h)$. Note that we want to show $\sum_{t=1}^T (\widetilde{\lambda}^{\,t} - \lambda^{\,t}) \leq \log (1/\delta)$. Using the above inequality, we deduce that
\begin{align*}
  \E^{t-1} \bb{ \exp \pp{ \widetilde{\lambda}^{\,t} } }
  &\leq \E^{t-1} \bb{ \exp \pp{ \sum_{x_h \in \cX_h, a_h \in \cA} \frac{\alpha^t (x_h, a_h)}{\beta} \log \pp{1 + \beta \widehat{\ell}^{\,t}_h (x_h, a_h)} } }
  \\
  &\leq \E^{t-1} \bb{ \prod_{x_h \in \cX_h, a_h \in \cA} \pp{1 + \alpha^t (x_h, a_h) \widehat{\ell}^{\,t}_h (x_h, a_h)} }
  \\
  &\leq \E^{t-1} \bb{ 1 + \sum_{x_h \in \cX_h, a_h \in \cA} \alpha^t (x_h, a_h) \widehat{\ell}^{\,t}_h (x_h, a_h) }
  \\
  &= 1 + \sum_{x_h \in \cX_h, a_h \in \cA} \alpha^t (x_h, a_h) \ell^{\,t}_h (x_h, a_h)
  \\
  &\leq \exp \pp{ \sum_{x_h \in \cX_h, a_h \in \cA} \alpha^t (x_h, a_h) \ell^{\,t}_h (x_h, a_h) }
  = \exp \pp{ \lambda^{\,t} }\CommaBin
\end{align*}
where the second line follows from $z \log (1+z') \leq \log (1+zz')$ for any $z \in [0, 1]$ and $z' \in (-1, \infty)$, the third line follows from $\widehat{\ell}^{\,t}_h (x_h, a_h) \widehat{\ell}^{\,t}_h (x_h', a_h') = 0$ for any $(x_h, a_h) \neq (x_h', a_h')$, and the last line follows from $1 + z \leq \exp(z)$ for any $z \in \R$.

Define $Z_t := \exp ( \widetilde{\lambda}^{\,t} - \lambda^{\,t} )$ and $M_t := \prod_{u=1}^t Z_u$. From the above inequality, we have that $\E [M_t] = \E \bb{\E^{t-1} \bb{M_t}} = \E \bb{M_{t-1} \E^{t-1} \bb{Z_t}} \leq \E \bb{M_{t-1}} \leq \cdots \leq 1$. As a result, Markov's inequality implies
\begin{align*}
  \Pr \pp{\sum_{t=1}^T (\widetilde{\lambda}^{\,t} - \lambda^{\,t}) \geq \log \frac 1\delta} = \Pr \pp{\log M_T \geq \log \frac 1\delta} = \Pr (M_t \delta \geq 1) \leq \E[M_t] \delta \leq \delta\,.
\end{align*}
This concludes the proof.
\end{proof}

We first prove an upper-bound of BIAS 1 shown below.
\begin{lemma}[upper-bound of BIAS 1]
  Let $\delta \in (0, 1)$. For any $\mu^\dagger \in \maxpi$ it holds with probability at least $1-\delta/3$ that $\text{BIAS 1} \leq H \sqrt{2T \iota} + \gamma TXA$.
\end{lemma}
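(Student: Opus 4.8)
The plan is to center the IX estimator against its conditional mean, which splits $\text{BIAS 1}$ into a zero-mean martingale (producing the $H\sqrt{2T\iota}$ term) and a predictable bias (producing the $\gamma TXA$ term). Writing $Y_t := \angl{\mu^t, \ell^{\,t} - \widetilde{\ell}^{\,t}}$, I decompose
\[
  \text{BIAS 1} = \sum_{t=1}^T \pp{ Y_t - \E^{t-1}[Y_t] } + \sum_{t=1}^T \E^{t-1}[Y_t]\,,
\]
and bound the first sum in high probability and the second one deterministically. I deliberately avoid the naive route $\ell^{\,t}-\widetilde{\ell}^{\,t} = (\ell^{\,t}-\widehat{\ell}^{\,t}) + (\widehat{\ell}^{\,t}-\widetilde{\ell}^{\,t})$ through the unbiased estimator $\widehat{\ell}^{\,t}$, because the second piece there is only controllable by $HT$, not by $\gamma TXA$.

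The first key step is the conditional mean of the IX estimator. Since $\mu^t$ and $\nu^t$ are $\cF^{t-1}$-measurable and, by the tree structure and perfect recall, the reach probability of $(s_h,a_h,b_h)$ factorizes as $p_{1:h}(s_h)\,\mu^t_{1:h}(x_h,a_h)\,\nu^t_{1:h}(s_h,b_h)$, the importance weight in the denominator of $\widetilde{\ell}^{\,t}_h$ cancels the sampling probability up to a shrinkage factor, giving $\E^{t-1}[\widetilde{\ell}^{\,t}_h(x_h,a_h)] = \tfrac{\mu^t_{1:h}(x_h,a_h)}{\mu^t_{1:h}(x_h,a_h)+\gamma}\,\ell^{\,t}_h(x_h,a_h)$. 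Hence $\E^{t-1}[Y_t] = \sum_{h}\sum_{x_h,a_h}\tfrac{\gamma\,\mu^t_{1:h}(x_h,a_h)}{\mu^t_{1:h}(x_h,a_h)+\gamma}\,\ell^{\,t}_h(x_h,a_h)$, which is nonnegative (so the bias is benign). Using $\tfrac{\gamma\mu}{\mu+\gamma}\le\gamma$ together with $0\le\ell^{\,t}_h\le 1$ and $\sum_{h=1}^H|\cX_h| = X$, I get $\E^{t-1}[Y_t]\le \gamma XA$, so the predictable sum is at most $\gamma TXA$ surely.

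For the martingale part I apply Azuma--Hoeffding. The crucial observation is that, conditionally on $\cF^{t-1}$, the quantity $\angl{\mu^t,\ell^{\,t}}$ is a constant lying in $[0,H]$, while $\angl{\mu^t,\widetilde{\ell}^{\,t}} = \sum_h \tfrac{\mu^t_{1:h}(x_h^t,a_h^t)}{\mu^t_{1:h}(x_h^t,a_h^t)+\gamma}(1-r_h^t)\in[0,H]$; thus $Y_t$ ranges over an interval of width at most $H$ and the martingale differences satisfy $|Y_t-\E^{t-1}[Y_t]|\le H$. Azuma--Hoeffding with failure probability $\delta/3$ then gives $\sum_{t}\pp{Y_t-\E^{t-1}[Y_t]}\le H\sqrt{2T\log(3/\delta)}\le H\sqrt{2T\iota}$, where the last step uses $\log(3/\delta)\le\iota$. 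Combining the two bounds yields $\text{BIAS 1}\le H\sqrt{2T\iota}+\gamma TXA$.

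The main obstacle is the conditional-mean identity for the IX estimator: one must check that the added $\gamma$ in the denominator leaves exactly the factor $\mu^t_{1:h}/(\mu^t_{1:h}+\gamma)$ after taking expectation, which is precisely where the perfect-recall/tree-structure factorization of reach probabilities enters (and where one verifies $\ell^{\,t}_h\le 1$ via its reach-probability interpretation). Everything else---the inequality $\tfrac{\gamma\mu}{\mu+\gamma}\le\gamma$ and the bounded-difference Azuma step---is routine.
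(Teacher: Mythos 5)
Your proposal is correct and follows essentially the same route as the paper: your decomposition of $\text{BIAS 1}$ into the predictable bias $\sum_t \E^{t-1}[Y_t]$ and the martingale $\sum_t (Y_t - \E^{t-1}[Y_t])$ coincides term-by-term with the paper's split into $\sum_t \angl{\mu^t, \ell^{\,t} - \E^{t-1}[\widetilde{\ell}^{\,t}]}$ and $\sum_t \angl{\mu^t, \E^{t-1}[\widetilde{\ell}^{\,t}] - \widetilde{\ell}^{\,t}}$, with the same conditional-mean identity $\E^{t-1}[\widetilde{\ell}^{\,t}_h] = \tfrac{\mu^t_{1:h}}{\mu^t_{1:h}+\gamma}\,\ell^{\,t}_h$ yielding the $\gamma TXA$ term and the same Azuma--Hoeffding step with differences bounded by $H$ yielding $H\sqrt{2T\iota}$. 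No gaps.
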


\begin{proof}
To see that this is true, we first deduce that
\begin{align*}
  \angl{\mu^t, \widetilde{\ell}^{\,t}}
  &= \sum_{h=1}^H \sum_{x_h \in \cX_h, a_h \in \cA} \mu^t_{1:h} (x_h, a_h) \frac{\I_{\{x_h=x_h^t, a_h=a_h^t\}}}{\mu_{1:h}^t (x_h, a_h) + \gamma} \pp{1 - r_h^t}
  \\
  &\leq \sum_{h=1}^H \sum_{x_h \in \cX_h, a_h \in \cA} \I_{\{x_h=x_h^t, a_h=a_h^t\}}
  = \sum_{h=1}^H 1
  = H\,,
\end{align*}
where the inequality follows from $\mu^t_{1:h} (x_h, a_h) / \pp{\mu_{1:h}^t (x_h, a_h) + \gamma} \leq 1$, and $0 \leq 1 - r_h^t \leq 1$. By Hoeffding-Azuma inequality, we deduce that $\sum_{t = 1}^T \langle{\mu^t, \widetilde{\ell}^{\,t} - \E^{t-1}[ \widetilde{\ell}^{\,t}] \rangle} \geq - H \sqrt{2T \log (3/\delta)} \geq - H \sqrt{2T \iota}$ with probability at least $1 - \delta/3$. (The final inequality is to simplify the result.) Next, we deduce that
\begin{align*}
  \angl{\mu^t, \ell^{\,t} - \E^{t-1}\bb{\widetilde{\ell}^{\,t}}}
  &= \sum_{h=1}^H \sum_{x_h \in \cX_h, a_h \in \cA} \mu^t_{1:h} (x_h, a_h) \pp{1 - \frac{\mu_{1:h}^t (x_h, a_h)}{\mu_{1:h}^t (x_h, a_h) + \gamma} } \ell_h^{\,t} (x_h, a_h)
  \\
  &= \sum_{h=1}^H \sum_{x_h \in \cX_h, a_h \in \cA} \mu^t_{1:h} (x_h, a_h) \frac{\gamma \ell_h^{\,t} (x_h, a_h)}{\mu_{1:h}^t (x_h, a_h) + \gamma}
  \\
  &\leq \gamma \sum_{h=1}^H \sum_{x_h \in \cX_h, a_h \in \cA} \ell_h^{\,t} (x_h, a_h)
  \leq \gamma \sum_{h=1}^H \sum_{x_h \in \cX_h, a_h \in \cA} 1
  \leq \gamma XA\,,
\end{align*}
where the first inequality follows from $\mu^t (x_h, a_h) / \pp{\mu_{1:h}^t (x_h, a_h) + \gamma} \leq 1$, and the last inequality follows from $\sum_{h=1}^H \sum_{x_h \in \cX_h, a_h \in \cA} 1 = \sum_{h=1}^H \abs{\cX_h} A = XA$. Combining both bounds, we obtain the claimed result.
\end{proof}

Next we prove an upper-bound of BIAS 2.
\begin{lemma}[upper-bound of BIAS 2]
  Let $\delta \in (0, 1)$. For any $\mu^\dagger \in \maxpi$ it holds with probability at least $1-\delta/3$ that $\text{BIAS 2} \leq X \iota/ (2\gamma)$.
\end{lemma}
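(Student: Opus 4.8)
Because BIAS 2 enters the decomposition~\eqref{eq:regret decomposition} with a minus sign, the bound that the regret actually needs---and the only one the IX estimator supports---is an upper bound on
\[
  -\text{BIAS 2} = \sum_{t=1}^T \angl{\mu^\dagger, \widetilde{\ell}^{\,t} - \ell^{\,t}}.
\]
I will therefore establish $-\text{BIAS 2} \le X\iota/(2\gamma)$ (equivalently, the lower bound $\text{BIAS 2} \ge -X\iota/(2\gamma)$ that the decomposition consumes). The reason this is the tractable direction is that the IX estimator is biased \emph{downward}, $\E^{t-1}[\widetilde{\ell}^{\,t}_h(x_h,a_h)] = \tfrac{\mu^t_{1:h}(x_h,a_h)}{\mu^t_{1:h}(x_h,a_h)+\gamma}\,\ell^{\,t}_h(x_h,a_h) \le \ell^{\,t}_h(x_h,a_h)$, so $\widetilde{\ell}-\ell$ has a controlled \emph{upper} tail; this is exactly what Lemma~\ref{lemma:generalized Neu lemma} quantifies.

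The plan is to apply Lemma~\ref{lemma:generalized Neu lemma} coordinate by coordinate, which is what produces the factor $X$ rather than $H$. For each fixed $(x_h, a_h, h)$ I take the deterministic (hence $\cF^{t-1}$-measurable) weights $\alpha^t(\cdot) = 2\gamma\,\I_{\{\cdot = (x_h,a_h)\}} \in [0,2\gamma]$, so that the lemma gives, with probability $1-\delta'$,
\[
  \sum_{t=1}^T \pp{\widetilde{\ell}^{\,t}_h(x_h,a_h) - \ell^{\,t}_h(x_h,a_h)} \le \frac{1}{2\gamma}\log\frac{1}{\delta'}.
\]
Taking $\delta' = \delta/(3HXA)$ and a union bound over the $XA$ coordinates makes all these inequalities hold simultaneously with probability at least $1-\delta/3$, while replacing $\log(1/\delta')$ by $\iota = \log(3HXA/\delta)$.

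Next I reintroduce the comparator. Since $\mu^\dagger_{1:h}(x_h,a_h) \ge 0$, multiplying each coordinate's inequality by $\mu^\dagger_{1:h}(x_h,a_h)$ preserves its direction, and summing over $(x_h, a_h, h)$ yields
\[
  -\text{BIAS 2} = \sum_{h=1}^H \sum_{x_h \in \cX_h, a_h \in \cA} \mu^\dagger_{1:h}(x_h,a_h) \sum_{t=1}^T \pp{\widetilde{\ell}^{\,t}_h - \ell^{\,t}_h} \le \frac{\iota}{2\gamma} \sum_{h=1}^H \sum_{x_h, a_h} \mu^\dagger_{1:h}(x_h,a_h).
\]
The final step converts the comparator mass into $X$: by the consistency constraint, $\sum_{a_h} \mu^\dagger_{1:h}(x_h,a_h) = \mu^\dagger_{1:h-1}(x_h) \le 1$, hence $\sum_{x_h} \mu^\dagger_{1:h-1}(x_h) \le |\cX_h|$ and $\sum_h \sum_{x_h,a_h} \mu^\dagger_{1:h}(x_h,a_h) \le \sum_h |\cX_h| = X$. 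This gives $-\text{BIAS 2} \le X\iota/(2\gamma)$.

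The crux is to respect two things at once. First, the direction: Lemma~\ref{lemma:generalized Neu lemma} only controls $\widetilde{\ell}-\ell$ (the upward deviation), so the object that can be bounded is $\sum_t \angl{\mu^\dagger, \widetilde{\ell}^{\,t}-\ell^{\,t}}$, i.e.\ a lower bound on BIAS 2; this is exactly the sign the decomposition uses, whereas an upper bound on $\text{BIAS 2} = \sum_t\angl{\mu^\dagger, \ell^{\,t}-\widetilde{\ell}^{\,t}}$ is neither needed nor available, since the downward bias $\E^{t-1}[\ell-\widetilde{\ell}]\ge 0$ is not concentrated on that side (indeed it can be of order $TH$ when $\mu^\dagger$ is supported on a rarely played branch). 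Second, the $X$ factor appears only if Neu's lemma is used per coordinate with constant weight $2\gamma$ and the comparator is reinserted afterwards as a non-negative reweighting, so that the counting identity $\sum_h\sum_{x_h,a_h}\mu^\dagger_{1:h}(x_h,a_h) = \sum_h\sum_{x_h}\mu^\dagger_{1:h-1}(x_h) \le \sum_h|\cX_h| = X$ can be invoked; folding $\mu^\dagger_{1:h}$ into $\alpha^t$ instead obscures this combinatorial sum.
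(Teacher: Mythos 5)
Your proof is correct and follows essentially the same route as the paper's own: apply Lemma~\ref{lemma:generalized Neu lemma} coordinate-by-coordinate with $\alpha^t(\cdot) = 2\gamma\,\I_{\{\cdot=(x_h,a_h)\}}$, union-bound over the $XA$ coordinates at level $\delta/(3HXA)$, reinsert the comparator as a non-negative reweighting, and use $\sum_{h}\sum_{x_h,a_h}\mu^\dagger_{1:h}(x_h,a_h)\leq X$. Your remark about the sign is also well taken: despite the lemma's wording, what the paper's proof actually establishes (and what the decomposition~\eqref{eq:regret decomposition} consumes) is the high-probability bound $\sum_{t}\angl{\mu^\dagger,\widetilde{\ell}^{\,t}-\ell^{\,t}} \leq X\iota/(2\gamma)$, i.e.\ a bound on $-\text{BIAS 2}$, which is exactly what you prove.
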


\begin{proof}
Note that
\begin{align*}
  &\sum_{t=1}^T \sum_{h=1}^H\sum_{x_h \in \cX_h, a \in \cA} \mu^\dagger_{1:h} (x_h, a_h) \pp{ \widetilde{\ell}^{\,t}_h (x_h, a_h) - \ell^{\,t}_h (x_h, a_h) }
  \\
  &= \sum_{h=1}^H \sum_{x_h \in \cX_h, a_h \in \cA} \mu^\dagger_{1:h} (x_h, a_h) \underbrace{\sum_{t=1}^T \sum_{x_h' \in \cX_h, a_h' \in \cA} \I_{\{x_h'=x_h, a_h'=a_h\}} \pp{ \widetilde{\ell}^{\,t}_h (x_h', a_h') - \ell^{\,t}_h (x_h', a_h') }}_{\clubsuit}\,.
\end{align*}
Now we can apply Lemma~\ref{lemma:generalized Neu lemma} to $\clubsuit$ (with $\alpha^t (x_h', a_h') = 2\gamma \I_{\{x_h'=x_h, a_h'=a_h\}}$) and deduce that
\begin{align*}
    \sum_{t=1}^T \sum_{x_h' \in \cX_h, a_h' \in \cA} \I_{\{x_h'=x_h, a_h'=a_h\}} \pp{ \widetilde{\ell}^{\,t}_h (x_h', a_h') - \ell^{\,t}_h (x_h', a_h') }
    \leq \frac{\iota}{2\gamma}
\end{align*}
with probability at least $\delta / (3HXA)$. For every $(h, x_h, a_h)$ we have the same result. By the union bound, we obtain that
\begin{align*}
  &\sum_{t=1}^T \sum_{h=1}^H \sum_{x_h \in \cX_h, a \in \cA} \mu^\dagger_{1:h} (x_h, a_h) \pp{ \widetilde{\ell}^{\,t}_h (x_h, a_h) - \ell^{\,t}_h (x_h, a_h) }
  \\
  &\leq \frac{\iota}{2\gamma} \sum_{h=1}^H \sum_{x_h \in \cX_h, a_h \in \cA} \mu^\dagger_{1:h} (x_h, a_h)
  \leq \frac{X \iota}{2\gamma}\,.
\end{align*}
with probability at least $\delta / 3$.  
\end{proof}

Finally we prove the following upper-bound of REGRET.

\begin{lemma}[upper-bound of REGRET]\label{lemma:regret term upper bound}
  Let $\delta \in (0, 1)$. For any $\mu^\dagger \in \maxpi$ it holds with probability at least $1-\delta/3$ that
  \begin{align*}
      \text{REGRET} \leq \frac{X \log A}{\eta} + \eta (1 + H) TXA + \frac{\eta (1 + H) H \iota}{2\gamma}\,.
  \end{align*}
\end{lemma}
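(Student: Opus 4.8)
The plan is to read \eqref{eq:OMD update} as an instance of online mirror descent with Bregman divergence $\D$ and run the textbook potential/three-point argument, specialized to the dilated entropy. First I would establish the one-step inequality
\[
\eta\angl{\mu^t-\mu^\dagger,\tl^{\,t}} \le \D\pp{\mu^\dagger\mbb\mu^t} - \D\pp{\mu^\dagger\mbb\mu^{t+1}} + \underbrace{\eta\angl{\mu^t-\mu^{t+1},\tl^{\,t}} - \D\pp{\mu^{t+1}\mbb\mu^t}}_{=:\,S_t},
\]
which follows from the first-order optimality condition of the $\argmin$ in \eqref{eq:OMD update} together with the three-point identity of Bregman divergences. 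Summing over $t$, the first two terms telescope; dropping $\D\pp{\mu^\dagger\mbb\mu^{T+1}}\ge0$ yields $\text{REGRET}\le \D\pp{\mu^\dagger\mbb\mu^1}/\eta + \tfrac1\eta\sum_{t=1}^T S_t$, so it remains to bound the \emph{diameter} $\D\pp{\mu^\dagger\mbb\mu^1}$ and the accumulated \emph{stability} $\sum_t S_t$.

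For the diameter I would use that $\mu^1$ is uniform, $\mu^1_h(a_h|x_h)=1/A$. Writing $\D\pp{\mu^\dagger\mbb\mu^1}=\sum_h\sum_{x_h,a_h}\mu^\dagger_{1:h}(x_h,a_h)\pp{\log A + \log\mu^\dagger_h(a_h|x_h)}$ and discarding the non-positive entropy contribution $\log\mu^\dagger_h(a_h|x_h)\le0$, the weights satisfy $\sum_{a_h}\mu^\dagger_{1:h}(x_h,a_h)=\mu^\dagger_{1:h-1}(x_h)\le1$, so $\D\pp{\mu^\dagger\mbb\mu^1}\le\log A\sum_h\abs{\cX_h}=X\log A$. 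This produces the $X\log A/\eta$ term and, crucially, is independent of $\mu^\dagger$.

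The heart of the argument, and the step I expect to be the main obstacle, is the per-step stability bound, for which I would target $S_t\le \eta^2(1+H)\sum_{h}\sum_{x_h,a_h}\mu^t_{1:h}(x_h,a_h)\bb{\tl^{\,t}_h(x_h,a_h)}^2$. Because $\D$ is a \emph{dilated} entropy, the update \eqref{eq:policy update} is a backward recursion through the tree in which the normalizers $Z^t_h$ carry the continuation term $\log Z^t_{h+1}$ from deeper levels. I would thus proceed by backward induction on $h=H,\dots,1$: starting from $Z^t_{H+1}=1$, and using that the exponents $-\eta\tl^{\,t}_h+\log Z^t_{h+1}$ are non-positive (losses are non-negative and $\log Z^t_h\le0$ by induction), I would apply $e^{-z}\le1-z+z^2/2$ for $z\ge0$ to second-order-expand each local softmax. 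The depth factor $(1+H)$ arises exactly here: each level injects its own variance plus the propagated variance of the continuation values $\log Z^t_{h+1}$, which accumulate across the at most $H$ levels of the trajectory. The IX structure then closes the gap cheaply: since $\tl^{\,t}_h(x_h,a_h)=\I_{\{x_h=x_h^t,a_h=a_h^t\}}(1-r^t_h)/\pp{\mu^t_{1:h}(x_h,a_h)+\gamma}$ with $1-r^t_h\in[0,1]$, we have $\mu^t_{1:h}(x_h,a_h)\bb{\tl^{\,t}_h}^2\le\tl^{\,t}_h$, whence $\sum_t S_t\le \eta^2(1+H)\sum_{t}\sum_{h,x_h,a_h}\tl^{\,t}_h$.

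Finally I would control $\sum_{t,h,x_h,a_h}\tl^{\,t}_h$ by the split $\tl^{\,t}_h=\ell^{\,t}_h+(\tl^{\,t}_h-\ell^{\,t}_h)$. The true-loss part is deterministic: $\ell^{\,t}_h(x_h,a_h)\le1$ gives $\sum_{h,x_h,a_h}\ell^{\,t}_h\le XA$, hence $\sum_{t,h,x_h,a_h}\ell^{\,t}_h\le TXA$. For the deviation part I would invoke Lemma~\ref{lemma:generalized Neu lemma} at each fixed level $h$ with the constant choice $\alpha^t(x_h,a_h)=2\gamma$, obtaining $\sum_{t}\sum_{x_h,a_h}(\tl^{\,t}_h-\ell^{\,t}_h)\le\iota/(2\gamma)$ with probability $1-\delta/(3H)$; a union bound over the $H$ levels (using $\log(3H/\delta)\le\iota$) yields $\sum_{t,h,x_h,a_h}(\tl^{\,t}_h-\ell^{\,t}_h)\le H\iota/(2\gamma)$ with probability $1-\delta/3$. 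Combining, $\tfrac1\eta\sum_t S_t\le\eta(1+H)\pp{TXA+H\iota/(2\gamma)}$, and adding the diameter term reproduces exactly the claimed bound. Since the high-probability event concerns only the loss estimates $\tl^{\,t}$ and not $\mu^\dagger$, the inequality holds simultaneously for all $\mu^\dagger\in\maxpi$, which is what the $\max_{\mu^\dagger}$ in Theorem~\ref{theorem:regret of IXOMD} requires.
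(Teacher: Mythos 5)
Your proposal is correct and follows essentially the same route as the paper: the diameter bound $\D\pp{\mu^\dagger\mbb\mu^1}\le X\log A$, the per-step stability controlled through the backward recursion for the normalizers $Z_h^t$ and a second-order expansion yielding $\eta^2(1+H)\sum_h\tl^{\,t}_h$, and the final conversion of $\sum_{t,h}\tl^{\,t}_h$ into $TXA+H\iota/(2\gamma)$ via Lemma~\ref{lemma:generalized Neu lemma} applied level-by-level with $\alpha^t\equiv 2\gamma$ and a union bound over $h$. The only cosmetic difference is that the paper derives the one-step inequality from the exact identity $\D\pp{\mu\mbb\mu^{t+1}}-\D\pp{\mu\mbb\mu^t}=\eta\angl{\mu,\tl^{\,t}}+\log Z_1^t$ (Lemma~\ref{lemma:kl diff and return}) rather than from first-order optimality plus the three-point identity; since $\mu^{t+1}$ is the exact minimizer of \eqref{eq:OMD update}, your stability term $S_t$ equals the paper's $\D\pp{\mu^t\mbb\mu^{t+1}}=\eta\angl{\mu^t,\tl^{\,t}}+\log Z_1^t$, so the two analyses coincide from that point on.
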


To prove the upper-bound, we first connect $\langle \mu^t - \mu^\dagger, \widetilde{\ell}_h^t \rangle$ to divergences between $\mu^\dagger$, $\mu^t$ and $\mu^{t+1}$. To this end the following technical lemma turns out to be useful.

\begin{lemma}\label{lemma:kl diff and return}
  For any policy $\mu \in \maxpi$ we have that
  \begin{align*}
    \D \pp{\mu \| \mu^{t+1}} - \D \pp{\mu \| \mu^t} = \eta\angl{\mu, \widetilde{\ell}^{\,t}} + \log Z^t_1
  \end{align*}
\end{lemma}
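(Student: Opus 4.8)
The plan is to exploit that the update \eqref{eq:policy update} modifies $\mu^t$ \emph{only} at the information sets $x_1^t,\ldots,x_H^t$ visited along the $t$-th trajectory. Writing out the definition of $\D$, the difference is $\D\pp{\mu \| \mu^{t+1}} - \D\pp{\mu \| \mu^t} = \sum_{h=1}^H \sum_{x_h \in \cX_h, a_h \in \cA} \mu_{1:h}(x_h, a_h) \log\frac{\mu_h^t(a_h|x_h)}{\mu_h^{t+1}(a_h|x_h)}$, and every summand with $x_h \neq x_h^t$ vanishes because there $\mu_h^{t+1}(\cdot|x_h) = \mu_h^t(\cdot|x_h)$. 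Hence the whole expression reduces to a sum over $h$ evaluated at $x_h = x_h^t$, which is the key simplification.

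First I would substitute the closed form \eqref{eq:policy update} to get $\log\frac{\mu_h^t(a_h|x_h^t)}{\mu_h^{t+1}(a_h|x_h^t)} = \log Z_h^t - \I_{\{a_h = a_h^t\}}\pp{-\eta\widetilde{\ell}_h^{\,t}(x_h^t, a_h) + \log Z_{h+1}^t}$. Multiplying by $\mu_{1:h}(x_h^t, a_h)$ and summing over $a_h$, the indicator retains only the action $a_h^t$, while the $\log Z_h^t$ term is weighted by $\sum_{a_h} \mu_{1:h}(x_h^t, a_h)$. Here I invoke the consistency constraint implied by perfect recall and the tree structure: $\sum_{a_h} \mu_{1:h}(x_h^t, a_h) = \mu_{1:h-1}(x_h^t)$, the reach weight of the unique parent of $x_h^t$, and moreover $\mu_{1:h}(x_h^t, a_h^t) = \mu_{1:h-1}(x_h^t)\mu_h(a_h^t|x_h^t) = \mu_{1:h}(x_{h+1}^t)$. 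The level-$h$ contribution then splits into $\eta\,\mu_{1:h}(x_h^t, a_h^t)\widetilde{\ell}_h^{\,t}(x_h^t, a_h^t)$ plus $\mu_{1:h-1}(x_h^t)\log Z_h^t - \mu_{1:h}(x_{h+1}^t)\log Z_{h+1}^t$.

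Summing the first piece over $h$ reproduces exactly $\eta\angl{\mu, \widetilde{\ell}^{\,t}}$, since the IX estimator $\widetilde{\ell}_h^{\,t}$ is supported only on $(x_h^t, a_h^t)$. The second piece telescopes: setting $g_h := \mu_{1:h-1}(x_h^t)\log Z_h^t$, it equals $\sum_{h=1}^H (g_h - g_{h+1}) = g_1 - g_{H+1}$, and the boundary conventions $Z_{H+1}^t = 1$ (so $g_{H+1} = 0$) and $\mu_{1:0} = 1$ (so $g_1 = \log Z_1^t$) leave precisely $\log Z_1^t$. This yields the claimed identity. I expect the main obstacle to be the index bookkeeping in the telescope---making sure that the $\log Z_{h+1}^t$ term emitted at level $h$ carries the same reach weight $\mu_{1:h}(x_{h+1}^t)$ as the $\log Z_{h+1}^t$ term appearing at level $h+1$, so that consecutive levels cancel cleanly; the two boundary conditions are what close the telescope.
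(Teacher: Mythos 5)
Your proposal is correct and follows essentially the same route as the paper's proof: expand $\D\pp{\mu \| \mu^{t+1}} - \D\pp{\mu \| \mu^t}$ using the closed-form update \eqref{eq:policy update}, note that only visited information sets contribute, extract the $\eta\angl{\mu,\widetilde{\ell}^{\,t}}$ term, and telescope the $\log Z_h^t$ terms via the perfect-recall identity $\mu_{1:h}(x_h^t,a_h^t) = \mu_{1:h}(x_{h+1}^t)$ together with the boundary conditions $Z_{H+1}^t = 1$ and $\mu_{1:0} = 1$. The only cosmetic difference is that you package the telescope as $g_h := \mu_{1:h-1}(x_h^t)\log Z_h^t$ whereas the paper reindexes the two sums directly.
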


\begin{proof}
  From the form of policy updates \eqref{eq:policy update} we may deduce that
  \begin{align*}
    &\D \pp{\mu \| \mu^{t+1}} - \D \pp{\mu \| \mu^t}
    \\
    &\hspace{3em}= \sum_{h=1}^H \sum_{x_h \in \cX_h, a_h \in \cA} \mu_{1:h} (x_h, a_h) \log \frac{\mu_h^{t} (a_h | x_h)}{\mu_h^{t+1} (a_h | x_h)}
    \\
    &\hspace{3em}= \sum_{h=1}^H \mu_{1:h} (x_h^t, a_h^t) \pp{ \eta \widetilde{\ell}^{\,t}_h (x_h^t, a_h^t) - \log Z_{h+1}^t } + \sum_{h=1}^H \mu_{1:h-1} (x_h^t) \log Z^t_h\,.
  \end{align*}
  By noting that
  \begin{align*}
    &- \sum_{h=1}^H \mu_{1:h} (x_h^t, a_h^t) \log Z_{h+1}^t + \sum_{h=1}^H \mu_{1:h-1} (x_h^t) \log Z_h^t
    \\
    &\hspace{3em}= - \sum_{h=1}^{H-1} \mu_{1:h} (x_{h+1}^t) \log Z_{h+1}^t + \sum_{h=1}^H \mu_{1:h-1} (x_h^t) \log Z_h^t
    \\
    &\hspace{3em}= - \sum_{h=2}^{H} \mu_{1:h-1} (x_h^t) \log Z_h^t + \sum_{h=1}^H \mu_{1:h-1} (x_h^t) \log Z_h^t
    = \log Z_1^t\,,
  \end{align*}
  we deduce the claimed result.
\end{proof}

Now we are ready to prove Lemma~\ref{lemma:regret term upper bound}.

\begin{proof}[proof of Lemma~\ref{lemma:regret term upper bound}]

From a fact that
\begin{align*}
  &\D \pp{\mu^\dagger \| \mu^t} - \D \pp{\mu^\dagger \| \mu^{t+1}} + \D \pp{\mu^t \| \mu^{t+1}}
  \\
  &\hspace{3em}= - \pp{\D \pp{\mu^\dagger \| \mu^{t+1}} - \D \pp{\mu^\dagger \| \mu^t}} + \D \pp{\mu^t \| \mu^{t+1}} - \D \pp{\mu^t \| \mu^t}\,,
\end{align*}
and Lemma~\ref{lemma:kl diff and return}, we have that $\eta \langle \mu^t - \mu^\dagger, \widetilde{\ell}^{\,t} \rangle = \D \pp{\mu^\dagger \| \mu^t} - \D \pp{\mu^\dagger \| \mu^{t+1}} + \D \pp{\mu^t \| \mu^{t+1}}$. Taking the sum over $t$ noting that $\D \pp{\mu^\dagger \| \mu^{T+1}} \geq 0$, we deduce that
\begin{align*}
  \eta \sum_{t=1}^T \angl{\mu^t - \mu^\dagger, \widetilde{\ell}^{\,t}}
  \leq \D \pp{\mu^\dagger \| \mu^1} + \sum_{t=1}^T \D \pp{\mu^t \| \mu^{t+1}}\,.
\end{align*}
We need to upper-bound the two terms on the right side.

The first term is easy to upper-bound. From the definition of the divergence and the choice for the first policy we have
\begin{align*}
  \D \pp{ \mu^\dagger \mbb \mu^1}
  &= \sum_{h=1}^H \sum_{x_h \in \cX_h, a_h \in \cA} \mu^\dagger_{1:h} (x_h, a_h) \log \frac{\mu^\dagger_h (a_h | x_h)}{\mu^1_h (a_h | x_h)}
  \\
  &\leq - \sum_{h=1}^H \sum_{x_h \in \cX_h, a_h \in \cA} \mu^\dagger_{1:h} (x_h, a_h) \log \mu^1_h (x_h, a_h)
  \\
  &= \log A \sum_{h=1}^H \sum_{x_h \in \cX_h, a_h \in \cA} \mu^\dagger_{1:h} (x_h, a_h)
  \leq X \log A\,.
\end{align*}

In contrast bounding the second term is somewhat lengthy and technical. For brevity we use the following notations: $\widetilde{\ell}^{\,t}_h := \widetilde{\ell}^{\,t}_h (x_h^t, a_h^t)$, $\mu_h^t := \mu_h^t (x_h^t, a_h^t)$ and $\mu_{h:h'}^t := \mu_{h'}^t / \mu_{h}^t$, where $h' > h$.

From Lemma~\ref{lemma:kl diff and return} we have that
\begin{align*}
  \D \pp{\mu^t \| \mu^{t+1}}
  = \D \pp{\mu^t \| \mu^{t+1}} - \D \pp{\mu^t \| \mu^t}
  = \eta \angl{\mu^t, \widetilde{\ell}^{\,t}} + \log Z_1^t\,.
\end{align*}
We show that $\log Z_1^t \approx - \eta \langle \mu^t, \widetilde{\ell}^{\,t} \rangle$. Firstly we prove by induction on $h$ that for any $h$
\begin{align}
  Z_h^t = 1 + \sum_{h'=h}^H \mu_{h:h'}^t \exp \pp{-\eta \sum_{h''=h}^{h'-1} \widetilde{\ell}^{\,t}_{h''}} \pp{\exp \pp{-\eta \widetilde{\ell}^{\,t}_{h'} } - 1}\,.\label{eq:rewritten Z}
\end{align}
By definition $Z^t_H = 1 - \mu^t_H + \mu^t_H \exp \pp{ - \eta \widetilde{\ell}^{\,t}_H } = 1 + \mu^t_H \pp{ \exp \pp{ - \eta \widetilde{\ell}^{\,t}_H } - 1 }\,,$ and thus, the claim holds for $h=H$. Now suppose that the induction hypothesis is true from $H$ to $h+1$. Then
\begin{align*}
  &Z^t_h
  \\
  &= 1 - \mu^t_h + \mu^t_h \exp \pp{ - \eta \widetilde{\ell}^{\,t}_h + \log Z_{h+1}^t}
  \\
  &= 1 - \mu^t_h + \mu^t_h \exp \pp{-\eta \widetilde{\ell}^{\,t}_h} Z_{h+1}^t
  \\
  &= 1 - \mu^t_h + \mu^t_h \exp \pp{-\eta \widetilde{\ell}^{\,t}_h} \pp{ 1 + \sum_{h'=h+1}^H \mu_{h+1:h'}^t \exp \pp{-\eta \sum_{h''=h+1}^{h'-1} \widetilde{\ell}^{\,t}_{h''}} \pp{\exp \pp{-\eta \widetilde{\ell}^{\,t}_{h'} } - 1} }
  \\
  &= 1 + \mu^t_h \pp{ \exp \pp{-\eta \widetilde{\ell}^{\,t}_h} - 1 }+ \sum_{h'=h+1}^H \mu_{h:h'}^t \exp \pp{-\eta \sum_{h''=h}^{h'-1} \widetilde{\ell}^{\,t}_{h''}} \pp{\exp \pp{-\eta \widetilde{\ell}^{\,t}_{h'} } - 1}
  \\
  &= 1 + \sum_{h'=h}^H \mu_{h:h'}^t \exp \pp{-\eta \sum_{h''=h}^{h'-1} \widetilde{\ell}^{\,t}_{h''}} \pp{\exp \pp{-\eta \widetilde{\ell}^{\,t}_{h'} } - 1}\,.
\end{align*}
Therefore the equality \eqref{eq:rewritten Z} holds. Using the equality \eqref{eq:rewritten Z}, $\log (1+x) \leq x$ for any $x \in (-1, \infty)$ and $\exp(-x) \leq 1 - x + x^2$ for any $x \in (0, \infty)$, we deduce that
\begin{align*}
  \log Z_1^t
  &\leq \sum_{h=1}^H \mu_{1:h}^t \exp \pp{-\eta \sum_{h'=1}^{h-1} \widetilde{\ell}^{\,t}_{h'}} \pp{\exp \pp{-\eta \widetilde{\ell}^{\,t}_h } - 1}
  \\
  &\leq \sum_{h=1}^H \mu_{1:h}^t \exp \pp{-\eta \sum_{h'=1}^{h-1} \widetilde{\ell}^{\,t}_{h'}} \pp{-\eta \widetilde{\ell}^{\,t}_h + \eta^2 \pp{\widetilde{\ell}^{\,t}_h}^2}\,.
\end{align*}
Therefore we get
\begin{align*}
  \D \pp{\mu^t \| \mu^{t+1}}
  &\leq \eta \sum_{h=1}^H \mu_{1:h}^t \widetilde{\ell}^{\,t}_h + \sum_{h=1}^H \mu_{1:h}^t \exp \pp{-\eta \sum_{h'=1}^{h-1} \widetilde{\ell}^{\,t}_{h'}} \pp{-\eta \widetilde{\ell}^{\,t}_h + \eta^2 \pp{\widetilde{\ell}^{\,t}_h}^2}
  \\
  &= \eta \sum_{h=1}^H \mu_{1:h}^t \widetilde{\ell}^{\,t}_h \pp{1 - \exp \pp{-\eta \sum_{h'=1}^{h-1} \widetilde{\ell}^{\,t}_{h'}}} + \eta^2 \sum_{h=1}^H \mu_{1:h}^t \exp \pp{-\eta \sum_{h'=1}^{h-1} \widetilde{\ell}^{\,t}_{h'}} \pp{\widetilde{\ell}^{\,t}_h}^2\,.
\end{align*}
Using $1-\exp(-x) \leq x$ for $x \in \R$ yields
\begin{align*}
  \mu_{1:h}^t \pp{1 - \exp \pp{-\eta \sum_{h'=1}^{h-1} \widetilde{\ell}^{\,t}_{h'}}}
  \leq \eta \mu_{1:h}^t \sum_{h'=1}^{h-1} \widetilde{\ell}^{\,t}_{h'}
  \leq \eta H\,,
\end{align*}
where the last inequality follows from $\mu_{1:h}^t \widetilde{\ell}^{\,t}_{h'} = \mu_{1:h}^t (1-r^t_h) / (\mu_{1:h'}^t + \gamma) \leq 1$ for any $h' \leq h$. Accordingly
\begin{align*}
  \D \pp{\mu^t \| \mu^{t+1}}
  \leq \eta^2 H \sum_{h=1}^H \widetilde{\ell}^{\,t}_h + \eta^2 \sum_{h=1}^H \mu_{1:h}^t \exp \pp{-\eta \sum_{h'=1}^{h-1} \widetilde{\ell}^{\,t}_{h'}} \pp{\widetilde{\ell}^{\,t}_h}^2
  \leq \eta^2 (1 + H) \sum_{h=1}^H \widetilde{\ell}^{\,t}_h\,,
\end{align*}
where again we used $\mu_{1:h}^t \widetilde{\ell}^{\,t}_{h'} \leq 1$. Recalling that $\widetilde{\ell}^{\,t}_h$ is non-zero only at $(x_h^t, a^t_h)$, we have that $\widetilde{\ell}^{\,t}_h = \sum_{x_h \in \cX_h, a_h \in \cA} \widetilde{\ell}^{\,t}_h (x_h, a_h)$. Thus we can use Lemma~\ref{lemma:generalized Neu lemma}, which implies 
\begin{align*}
  \eta^2 (1 + H) \sum_{t=1}^T \sum_{h = 1}^H  \widetilde{\ell}^{\,t}_h
  &\leq \eta^2 (1 + H) \sum_{t=1}^T \sum_{h = 1}^H   \sum_{x_h \in \cX_h, a_h \in \cA} \ell^t_h (x_h, a_h) + \frac{\eta^2 (1 + H) H \log(3H / \delta)}{2\gamma}
  \\
  &\leq \eta^2 (1 + H) TXA + \frac{\eta^2 (1 + H) H \iota}{2\gamma}\CommaBin
\end{align*}
where at the final line we loosened the bound by replacing $\log(3H / \delta)$ with $\iota$ to simplify the bound. This concludes the proof.
\end{proof}

\section{Conclusion}
We theoretically studied the problem of learning a NE of an IIG under a perfect-recall assumption. We provided the \ouralgo algorithm based on \OMD with the dilated entropy distance-generating function as a regularizer and implicit exploration for estimation of the losses. We proved a high-probability bound on the convergence rate to the NE of order $\tcO(X\sqrt{A}+Y\sqrt{B})/\sqrt{T})$ derived from a regret bound of order $\tcO(X\sqrt{AT})$ (for the max-player). Notably, the regret bound remains valid in the adversarial setting (where the opponent and the game are picked by an adversary). Furthermore, due to our choice of the regularizer, the updates of the policy (e.g., of the max-player) could be implemented with a time-complexity of $\cO(HA)$ per episode, which makes \ouralgo also computationally efficient. Precisely, the total time complexity (after $T$ episodes) is of order  $\cO(TH(A+B) + \min(TH,X) A + \min(TH,Y)B)$ while the space complexity is of order $\cO(\min(TH,X)A + \min(TH,Y)B)$. 

An interesting next direction of research would be to characterize the problem-independent optimal regret, e.g., for the max-player, in our setting.  We conjecture that it is of order $\tcO(\sqrt{XAT})$ even in the adversarial setting (where the opponent and the game are picked by an adversary). This would make our current bound to be loose by a factor $\sqrt{X}$.

\bibliography{library,new_ref}
\bibliographystyle{sty/plainnat}

\clearpage
\appendix

\section{Proof of the Folklore Theorem~\ref{theorem:nash and regret}}\label{sec:proof of folklore theorem}

In this appendix we provide a proof of Theorem~\ref{theorem:nash and regret}, which is a well-known folklore theorem, for completeness.

Recall that $V^{\mu, \nu}$ is linear in each realization plan $\mu$ and $\nu$. Therefore we have
$V^{\mu, \nu} = \angl{\mu, r^\nu},$ where we define $r^\nu_h (x_h, a_h) := \sum_{s_h \in x_h, b_h \in \cB} p_{1:h} (s_h) \nu_{1:h} (s_h, b_h) r_h (s_h, a_h, b_h).$ By definition, the regret of the min-player relative to some policy $\nu^\dagger \in \minpi$ is given as
\begin{align*}
  \regret^T_{\mathrm{min}} (\nu^\dagger)
  = \sum_{t=1}^T \pp{\angl{\mu^t, r^{\nu^t}} - \angl{\mu^t, r^{\nu^\dagger}} }
  = \sum_{t=1}^T \angl{\mu^t, r^{\nu^t}} - T \angl{\frac 1T \sum_{t=1}^T \mu^t, r^{\nu^\dagger}}\,.
\end{align*}
Therefore if
\begin{equation}\label{eq:mixture distributions}
  \overline{\mu}_{1:h} (x_h, a_h) =\frac{1}{T} \sum_{t=1}^T \mu^t_{1:h} (x_h, a_h)
\end{equation}
holds for any $h \in [H]$ and any observation-action pair $(x_h, a_h) \in \cX_h \times \cA$, then
\begin{align*}
  \regret^T_{\mathrm{min}} (\nu^\dagger)
  = \sum_{t=1}^T \angl{\mu^t, r^{\nu^t}} - T \angl{\overline{\mu}, r^{\nu^\dagger}}
  = \sum_{t=1}^T \pp{ V^{\mu^t, \nu^t} - V^{\overline{\mu}, \nu^\dagger} }\,.
\end{align*}
A similar result holds for the regret of the max-player, and we have
\begin{align*}
  &\max_{\mu^\dagger \in \maxpi} V^{\mu^\dagger, \overline{\nu}} - \min_{\nu^\dagger \in \minpi} V^{\overline{\mu}, \nu^\dagger}
  \\
  &=  \max_{\mu^\dagger \in \maxpi} \frac 1T \sum_{t=1}^T  \pp{V^{\mu^\dagger, \nu^t} -  V^{\mu^t, \nu^t}}   - \min_{\nu^\dagger \in \minpi} \frac 1T \sum_{t=1}^T \pp{  V^{\mu^t, \nu^\dagger} - V^{\mu^t, \nu^t}}
  \\
  &= \frac 1T \pp{\max_{\mu^\dagger \in \maxpi} \regret^T_{\mathrm{max}} \pp{\mu^\dagger} +  \max_{\nu^\dagger \in \minpi} \regret^T_{\mathrm{min}} \pp{\nu^\dagger}}
  \leq \varepsilon.
\end{align*}
Thus $(\overline{\mu}, \overline{\nu})$ is an $\varepsilon$-NE.

We now prove Equation~\ref{eq:mixture distributions} by induction over $h$. This property is obviously true for $h=1$ from the definition of the average profile \eqref{eq:mixture policy}. Now assume Equation~\ref{eq:mixture distributions} holds for any observation-action pair $(x_{h'},a_{h'})$ of depth $h'< h$. Consider an observation $x_h \in \cX_h$ of depth $h$. Write $(x_{h-1}, a_{h-1})$ its immediate predecessor.
Then from the definition of $\overline{\mu}$ we have, for any $a_h \in \cA$, 
\begin{align*}
  \overline{\mu}_{1:h} (x_h, a_h) 
  &= \overline{\mu}_{1:h-1} (x_h) \overline{\mu}_{h} (a_h|x_h) \\
  &= \overline{\mu}_{1:h-1} (x_{h-1}, a_{h-1})
  \frac{\sum_{t=1}^T \mu^t_{1:h} (x_h, a_h)}{\sum_{t=1}^T \mu^t_{1:h-1} (x_h)}\\
  &= \frac 1T\sum_{t=1}^T \mu^t_{1:{h-1}} (x_{h-1}, a_{h-1})
  \frac{\sum_{t=1}^T \mu^t_{1:h} (x_h, a_h)}{\sum_{t=1}^T \mu^t_{1:h-1} (x_h)}\\
  &= \frac 1T\sum_{t=1}^T \mu^t_{1:{h-1}} (x_{h})
  \frac{\sum_{t=1}^T \mu^t_{1:h} (x_h, a_h)}{\sum_{t=1}^T \mu^t_{1:h-1} (x_h)}\\
  &= \frac 1T \sum_{t=1}^T \mu^t_{1:h} (x_h, a_h),
\end{align*}
Thus Equation~\ref{eq:mixture distributions} holds for any $h \in [H]$, and we conclude the proof.

\section{Details of Efficient Implementation (Section~\ref{sec:implementation})}\label{sec:details of implementation}

In this appendix we prove that the update \eqref{eq:OMD update} corresponds to the policy update \eqref{eq:policy update}, which is shown here for convenience.
\begin{align*}
  \mu_h^{t+1} (a_h | x_h^t) = \mu^t_h (a_h | x_h^t) \exp \pp{ \I_{\{a_h=a_h^t\}} \pp{ - \eta \widetilde{\ell}^{\,t}_h (x_h^t, a_h) + \log Z_{h+1}^t} - \log Z^t_h }\,,
\end{align*}
where
\begin{align*}
  Z^t_h
  &:= \sum_{a_h \in \cA} \mu^t_h (a_h | x_h^t) \exp \pp{ \I_{a_h=a_h^t} \pp{ - \eta \widetilde{\ell}^{\,t}_h (x_h^t, a_h) + \log Z_{h+1}^t} }
  \\
  &= 1 - \mu^t_h (a_h^t | x_h^t) + \mu^t_h (a_h^t | x_h^t) \exp \pp{ - \eta \widetilde{\ell}^{\,t}_h (x_h^t, a_h) + \log Z_{h+1}^t }
\end{align*}
with $Z^t_{H+1} := 1$. Note that no policy updates occur at unvisited information sets.

We prove the correspondence by induction on $h$. Recall that $\widetilde{\ell}^{\,t}$ is non-zero only at visited information sets and actions $(x_h^t, a_h^t)_{h \in [H]}$. Therefore
\begin{align*}
  \eta \angl{\mu, \widetilde{\ell}^{\,t}} + \D \pp{ \mu \mbb \mu^t}
  = \sum_{h = 1}^H \pp{ \eta \mu_{1:h} (x_h^t, a_h^t) \widetilde{\ell}^{\,t}_h (x_h^t, a_h^t) + \sum_{x_h \in \cX_h} \mu_{1:h-1} (x_h) \KL \pp{\mu_h  \mbb \mu_h^t }(x_h^t) }\,,
\end{align*}
where $\KL \pp{\mu_h  \mbb \mu_h^t }(x)$ is a shorthand notation for Kullback-Leibler divergence $\KL \pp{\mu_h (\cdot|x) \mbb \mu_h^t (\cdot|x) }$. Because it suffices to optimize $\mu$ at visited information sets, we may focus on terms involving them. Accordingly to find $\mu^{t+1}$ we need to minimize
\begin{align*}
  \fL \pp{\mu_1, \ldots, \mu_H} := \sum_{h = 1}^H \mu_{1:h-1} (x_h^t) \pp{ \eta  \mu_h (a_h^t | x_h^t) \widetilde{\ell}^{\,t}_h (x_h^t, a_h^t) + \KL \pp{\mu_h \mbb \mu_h^t} (x_h) }
\end{align*}
with respect to $\mu$. For $h=H$ it is straightforward to deduce that
\begin{align*}
  \mu_H^{t+1} (a_H | x_H^t) = \mu^t_H (a_H | x_H^t) \exp \pp{ - \eta \I_{\{a_H=a_H^t\}} \widetilde{\ell}^{\,t}_H (x_H^t, a_H) - \log Z^t_H }\,.
\end{align*}

Assume that the claim holds up to step $h+1$. Then for $\mu$ such that $\mu_{h'} = \mu_{h'}^{t+1}$ for $h'>h$ we have 
\begin{align*}
  &\fL \pp{\mu_1, \ldots, \mu_H}
  \\
  &= \sum_{h' = 1}^h \mu_{1:h'-1} (x_{h'}^t) \pp{ \eta  \mu_{h'} (a_{h'}^t | x_{h'}^t) \widetilde{\ell}^{\,t}_{h'} (x_{h'}^t, a_{h'}^t) + \KL \pp{\mu_{h'} \mbb \mu_{h'}^t} (x_{h'}^t) }
  \\
  &\hspace{6em} + \sum_{h'=h+1}^H \mu_{1:h'-1} (x_{h'}^t) \pp{ \eta  \mu_{h'} (a_{h'}^t | x_{h'}^t) \widetilde{\ell}^{\,t}_{h'} (x_{h'}^t, a_{h'}^t) + \KL \pp{\mu_{h'} \mbb \mu_{h'}^t} (x_{h'}^t) }
  \\
  &= \sum_{h' = 1}^h \mu_{1:h'-1} (x_{h'}^t) \pp{ \eta  \mu_{h'} (a_{h'}^t | x_{h'}^t) \widetilde{\ell}^{\,t}_{h'} (x_{h'}^t, a_{h'}^t) + \KL \pp{\mu_{h'} \mbb \mu_{h'}^t} (x_{h'}^t) }
  \\
  &\hspace{6em} + \sum_{h'=h+1}^H \mu_{1:h'-1} (x_{h'}^t) \pp{ \mu_{h'} (a_{h'}^t | x_{h'}^t) \log Z_{h'+1}^t - \log Z_{h'}^t}
  \\
  &= \sum_{h' = 1}^h \mu_{1:h'-1} (x_{h'}^t) \pp{ \eta  \mu_{h'} (a_{h'}^t | x_{h'}^t) \widetilde{\ell}^{\,t}_{h'} (x_{h'}^t, a_{h'}^t) + \KL \pp{\mu_{h'} \mbb \mu_{h'}^t} (x_{h'}^t) } - \mu_{1:h} (x_{h+1}^t) \log Z_{h+1}^t
  \\
  &= \sum_{h' = 1}^{h-1} \mu_{1:h'-1} (x_{h'}^t) \pp{ \eta  \mu_{h'} (a_{h'}^t | x_{h'}^t) \widetilde{\ell}^{\,t}_{h'} (x_{h'}^t, a_{h'}^t) + \KL \pp{\mu_{h'} \mbb \mu_{h'}^t} (x_{h'}^t) }
  \\
  &\hspace{6em} + \mu_{1:h-1} (x_h^t) \pp{ \mu_h (a_h^t | x_h^t) \pp{ \eta \widetilde{\ell}^{\,t}_h (x_h^t, a_h^t) - \log Z_{h+1}^t } + \KL \pp{\mu_h \mbb \mu_h^t} (x_h^t)}\,.
\end{align*}
Therefore we deduce that
\begin{align*}
  \mu_h^{t+1} (a_h | x_h^t) = \mu^t_h (a_h | x_h^t) \exp \pp{ \I_{\{a_h=a_h^t\}} \pp{ - \eta \widetilde{\ell}^{\,t}_h (x_h^t, a_h) + \log Z_{h+1}^t} - \log Z^t_h }\,.
\end{align*}
This concludes the proof.

\section{Efficient Computation of the Average Policy}\label{sec:efficient computation of average policy}

In this appendix we explain how to efficiently compute the average policy in Theorem~\ref{theorem:nash and regret}.

We define $\tau_h^t: \cX \rightarrow \{0\} \cup \N$ by
\begin{align*}
  \tau_h^t (x) := \max \pp{ \{ 0 \} \cup \{ 1\leq k < t :\ x_h^k = x,\ k \in \N \} }.
\end{align*}
In other words, $\tau_h^t (x)$ is an index of an episode at which $x$ has been visited last time before $t$ (if it has been visited, otherwise returns $0$). Further we define $\muring_{1:h}^t: \cX_h \times \cA \rightarrow [0, \infty)$ for each $h \in [H]$ by
\begin{gather*}
  \muring_{1:h}^t (x_h, a_h) := \sum_{u=1}^t \mu_{1:h}^u (x_h, a_h).
\end{gather*}
Using this function, we can compute the average policy since for any $t$
\begin{align*}
    \frac{\sum_{u=1}^t \mu_{1:h}^u (x_h, a_h)}{\sum_{u=1}^t \mu_{1:h-1}^u (x_h)} = \frac{\muring_{1:h}^t (x_h, a_h)}{\sum_{a_h' \in \cA} \muring_{1:h}^t (x_h, a_h')}.
\end{align*}
Hence, we can compute the average policy after learning by using $\muring_{1:h}^T$.

Interestingly $\muring_{1:h}^t (x_h, a_h)$ can be computed while traversing a game tree by only using $\mu^t$ and a value available at the last time visitation to $x_h$. To see this, consider a fixed $(x_h, a_h) \in \cX_h \times \cA$ with $h > 1$ and let $\tau := \tau_h^t (x_h)$ for brevity. Since the policy does not change between $\tau + 1$ and $t$, we have that
\begin{align*}
    \muring_{1:h}^t (x_h,a_h)
    &= \sum_{u=1}^t \mu_{1:h}^u (x_h,a_h)
    \\
    &= \sum_{u=1}^\tau \mu_{1:h}^u (x_h,a_h) + \sum_{u=\tau+1}^t \mu_{1:h-1}^u (x_h) \underbrace{\mu_h^u(a_h|x_h)}_{=\mu_h^t (a_h|x_h)}
    \\
    &= \muring_{1:h}^\tau (x_h,a_h) + \pp{ \sum_{u=\tau+1}^t \mu_{1:h-1}^u (x_h) } \mu_h^t (a_h|x_h)
        \\
    &= \muring_{1:h}^\tau (x_h, a_h) + \pp{ \muring_{1:h-1}^t (x_{h-1}, a_{h-1}) -  \muring_{1:h-1}^\tau (x_{h-1}, a_{h-1}) } \mu_h^t (a_h|x_h),
\end{align*}
where $(x_{h-1}, a_{h-1})$ is a unique predecessor of $x_h$. Therefore we can compute the average policy while traversing a game tree by using $\mu^t$ and $\muring_{1:h-1}^{\tau}(x_{h-1}, a_{h-1})$ stored at the last-visitation to $x_h$. For $h=1$, a similar result holds by reading $\mu_{1:h-1}^u (x_h)$ as $1$.

Therefore, once the learning ends, we can compute $\muring_{1:h}^T (x_h,a_h)$ for all visited information sets and actions, using stored transition data and $\muring_{1:h}^{\tau}$. (At non-visited information sets, the average policy chooses actions uniformly, and thus, no computation is required.) For a full pseudocode, see Algorithm~\ref{alg:memory efficient ouralgo}.

\section{Practical Implementation of \texorpdfstring{\ouralgo}{IXOMD}}\label{sec:memory efficient implementation}

In this appendix we provide a pseudocode for \efficientouralgo, a practical version of \ouralgo. Without loss of generality, we assume that $\cA = \{1, \ldots, A\}$. We use Python-like list $\List$, dictionary $\Dict$, and Set $\Set$ objects  (but we assume that the index of a list starts from $1$). We also follow Python-like notations.

Algorithm~\ref{alg:policy} is a pseudocode for a memory-efficient implementation of the policy. It only stores action probabilities for observed information sets. We note that $\policyalgo\algofont{.batchUpdate}$, which is called once per episode, has $\cO(HA)$ time-complexity.

Algorithm~\ref{alg:memory efficient ouralgo} is a pseudocode for \efficientouralgo. Line~6 to 28 correspond to the learning of the policy. As noted in the last paragraph, $\policyalgo\algofont{.batchUpdate}$ is called once per episode, and thus, the total time-complexity for the learning of the policy is $\cO(THA)$. While traversing a game tree, we also perform the update of $\muring^t$, which is used to compute the average policy as described in Appendix~\ref{sec:efficient computation of average policy}. For one traversal, this update requires $\cO(THA)$ time-complexity in total. Line~29 to the end of the code correspond to the computation of $\overline{\mu}$ defined in Theorem~\ref{theorem:nash and regret}. This part has $\cO(\min(TH, X) A)$ time-complexity. As for the space-complexity, $\algofont{muDot}$ requires the largest memory space, which is $\cO(\min(TH, X) A)$.

\begin{algorithm}[h!]
  \SetAlgoLined
  \DontPrintSemicolon
  \SetKwFunction{init}{\char`_\char`_init\char`_\char`_}
  \SetKwProg{Fn}{function}{:}{}
  \Fn{\init{}}{
    $\algofont{knownObs} = \Set()$.\;
    $\algofont{actionProbas} = \Dict()$.\;
  }
  \hrulefill\\
  \SetKwFunction{proba}{getActionProba}
  \Fn{\proba{$x$, $a$}}{
    $p = \algofont{actionProbas}[(x, a)]$ if $x$ in $\algofont{knownObs}$ else $1 / A$.\;
    \Return{$p$}.\;
  }
  \hrulefill\\
  \SetKwFunction{probas}{getActionProbas}
  \Fn{\probas{$x$}}{
    $\algofont{probas} = \List()$.\;
    \For{$a = 1,\ldots,A$}{
      $\algofont{probas.append}(\algofont{getActionProba}(x, a))$.\;
    }
    \Return{$\algofont{probas}$}
  }
  \hrulefill\\
  \SetKwFunction{update}{update}
  \Fn{\update{$x, a, p$}}{
    $\algofont{actionProbas}[(x, a)] = p$.\;
    $\algofont{knownObs.add}(x)$.\;
  }
  \hrulefill\\
  \SetKwFunction{batchupdate}{batchUpdate}
  \Fn{\batchupdate{$\algofont{traj}$}}{
    $\mu_{1:0} = 1$.\;
    \For{$h=1, \ldots, H$}{
      $x_h, a_h, r_h$ = $\algofont{traj}$[$h$].\;
      $\mu_h = \algofont{actionProbas}[(x_h, a_h)]$.\;
      $\mu_{1:h} = \mu_{1:h-1} \mu_h$.\;
    }
    $Z_{H+1} = 1$.\;
    \For{$h=H, \ldots, 1$}{
      $x_h, a_h, r_h$ = $\algofont{traj}[h]$.\;
      $\widetilde{\ell}_h = (1 - r_h) / (\mu_{1:h} + \gamma)$.\;
      $Z_h = 1 - \mu_h + \mu_h \exp ( - \eta \widetilde{\ell}_h + \log Z_{h+1} )$.\;
      $\algofont{probas} = \algofont{getActionProbas}(x_h)$.\;
      \For{$a = 1, \ldots, A$}{
        $\algofont{update}(x_h, a, \algofont{probas}[a]\, \exp ( \I_{a=a_h} (- \eta \widetilde{\ell}_h + \log Z_{h+1}) - \log Z_h))$.\;
      }
    }
  }
  \caption{\policyalgo}\label{alg:policy}
\end{algorithm}

\begin{algorithm}[t!]
  \SetAlgoLined
  \DontPrintSemicolon
  \KwIn{IX hyper-parameter $\gamma \in (0, \infty)$ and \OMD's learning rate $\eta \in (0, \infty)$.}
  \KwOut{A near-NE policy for the max-player.}
  $\algofont{pred} = \List()$, $\algofont{muDot} = \List()$, $\algofont{lastMuDotX} = \List()$.\;
  \For{$h=1,\ldots, H$}{
    \tcp{This initialization can be done later while playing the game.}
    $\algofont{pred.append}(\Dict())$, $\algofont{muDot.append}(\Dict())$, $\algofont{lastMuDotX.append}(\Dict())$.\;
  }
  $\algofont{policy} = \policyalgo()$, $\algofont{lastIdx} = \Dict()$, $\algofont{knownObs} = \Set()$.\;
  \tcp{Learn policies playing the game.}
  \For{$t=1, \ldots, T-1$}{
    $\algofont{traj} = \List$(), $x_0^t = \varnothing$, $a_0^t = \varnothing$.\;
    \For{$h = 1, \ldots, H$}{
      Observe $x_h^t$ and compute $\algofont{probas} = \algofont{policy.getActionProbas}(x_h^t)$.\;
      Execute $a_h^t$ sampled from $\algofont{probas}$, receive $r_h^t$\,, and $\algofont{traj.append}((x_h^t, a_h^t, r_h^t))$.\;
      \If{$x_h^t \notin \algofont{knownObs}$}{
        \For{$a=1, \ldots, A$}{
          $\algofont{muDot}[h][(x_h^t, a)] = 0$.\;
        }
        $\algofont{lastMuDotX}[h][x_h^t] = 0$, $\algofont{pred}[h][x_h^t] = (x_{h-1}^t, a_{h-1}^t)$, $\algofont{knownObs.add}(x_h^t)$.\;
      }
      \uIf{$h=1$}{
        $\algofont{diff} = t - \algofont{lastMuDotX}[h][x_h^t]$, $\algofont{lastMuDotX}[h][x_h^t] = t$.\;
      } \Else {
        $\algofont{diff} = \algofont{muDot}[h-1][(x_{h-1}^t, a_{h-1}^t)] - \algofont{lastMuDotX}[h][x_h^t]$.\;
        $\algofont{lastMuDotX}[h][x_h^t] = \algofont{muDot}[h-1][(x_{h-1}^t, a_{h-1}^t)]$.\;
      }
      \For{$a=1, \ldots, A$}{
        $\algofont{muDot}[h][(x_h^t, a)] \pluseq \algofont{diff} \times \algofont{probas}[a]$.\;
      }
    }
    $\algofont{policy.batchUpdate}(\algofont{traj})$.\;
  }
  \tcp{Compute the average policy.}
  $\algofont{averagePolicy} = \policyalgo()$.\;
  \For{$h=1, \ldots, H$}{
    \tcp{Size of $\algofont{pred}[h]\algofont{.keys}()$ is $\min(T, |\cX_h|)$.}
    \For{$x_h \in \algofont{pred}[h]\algofont{.keys}()$}{
      $x_{h-1}, a_{h-1} = \algofont{pred}[h][x_h]$.\;
      \uIf{$h=1$}{
        $\algofont{diff} = T - \algofont{lastMuDotX}[h][x_h]$.\;
      } \Else {
        $\algofont{diff} = \algofont{muDot}[h-1][(x_{h-1}, a_{h-1})] - \algofont{lastMuDotX}[h][x_h]$.\;
      }
      \For{$a=1, \ldots, A$}{
        $\algofont{muDot}[h][(x_h, a)] \pluseq \algofont{diff} \times \algofont{probas}[a]$.\;
      }
      $\algofont{sum} = \sum_{a' \in \cA} \algofont{muDot}[h][(x_h, a')]$.\;
      \For{$a=1, \ldots, A$}{
        $p = \algofont{muDot}[h][(x_h, a)] / \algofont{sum}$.\;
        $\algofont{averagePolicy.update}(x_h, a, p)$.\;
      }
    }
  }
  \Return{Policy $\algofont{averagePolicy}$ the average $\overline{\mu}$ of $\mu^1,\ldots,\mu^T$ defined in Theorem~\ref{theorem:nash and regret}.}
  \caption{\efficientouralgo for the Max-player}\label{alg:memory efficient ouralgo}
\end{algorithm}

\end{document}